\newcommand{\SparringEXP}{\ensuremath{\mathtt{SparringEXP4.P}}\xspace}
\newcommand{\SparringFPL}{\ensuremath{\mathtt{SparringFPL}}\xspace}
\newcommand{\FPL}{\ensuremath{\mathtt{FPL}}\xspace}
\newcommand{\ProjectedGD}{\ensuremath{\mathtt{ProjectedGD}}\xspace}
\newcommand{\expfp}{\ensuremath{\mathtt{Exp4.P}}\xspace}
\newcommand{\eps}{\ensuremath{\varepsilon}}
\newcommand{\indicator}[1]{\mathbf{1}\{#1\}}
\newcommand{\1}{\indicator}
\newcommand{\argmin}{\operatornamewithlimits{argmin}}
\newcommand{\xhdr}[1]{\vspace{1mm} \noindent{\bf #1}}
\renewcommand{\eqref}[1]{Eq.~(\ref{#1})}
\newcommand{\eqrefii}[2]{Eqs.~(\ref{#1}) and~(\ref{#2})}
\newcommand{\GA}{{G_A}}
\newcommand{\Gpi}{{G_{\pi}}}
\newcommand{\exprow}{{row-Exp}}
\newcommand{\expcol}{{column-Exp}}
\newcommand{\rowfpl}{{row-FPL}}
\newcommand{\colfpl}{{column-FPL}}
\newcommand{\pref}{\mathbf{P}}
\newcommand{\preft}{\pref_t}
\newcommand{\prefi}{\pref_i}
\newcommand{\prefelt}[2]{{P({#1},{#2})}}
\newcommand{\prefeltt}[2]{{P_t({#1},{#2})}}
\newcommand{\prefelti}[2]{{P_i({#1},{#2})}}
\newcommand{\prefab}{\prefelt{a}{b}}
\newcommand{\preftab}{\prefeltt{a}{b}}
\newcommand{\prefiab}{\prefelti{a}{b}}
\newcommand{\prefaa}{\prefelt{a}{a}}
\newcommand{\epref}{\mathbf{\hat{P}}}
\newcommand{\eprefi}{{\epref_i}}
\newcommand{\eprefielt}[2]{{\hat{P}_i({#1},{#2})}}
\newcommand{\eprefiaibi}{\eprefielt{a_i}{b_i}}
\newcommand{\eprefiab}{\eprefielt{a}{b}}
\newcommand{\eprefbnd}{L}
\newcommand{\eprefvar}{V}
\newcommand{\empmatbnd}{\varepsilon'}
\newcommand{\algerr}{\varepsilon}
\newcommand{\Msup}{\mathbf{M}}
\newcommand{\Msupelt}[2]{{M({#1},{#2})}}
\newcommand{\altpi}{\rho}
\newcommand{\Msuppipi}{\Msupelt{\pi}{\altpi}}
\newcommand{\Memp}{\mathbf{\hat{M}}}
\newcommand{\Mempelt}[2]{{\hat{M}({#1},{#2})}}
\newcommand{\Memppipi}{\Mempelt{\pi}{\altpi}}
\newcommand{\bldiag}{\mathbf{B}}
\newcommand{\bldiagelt}[4]{{B(({#1},{#2}),({#3},{#4}))}}
\newcommand{\numbatch}{m}
\newcommand{\astar}{{a^*}}
\newcommand{\X}{{\cal X}}
\newcommand{\Dpair}{{\cal D}}
\newcommand{\picard}{{|\Pi|}}
\newfont{\cmmib}{cmmib10}
\newcommand{\boldell}{{\mbox{\cmmib \symbol{'140}}}}
\newcommand{\decsp}{{\cal D}}
\newcommand{\losssp}{{\cal L}}
\newcommand{\dd}{\mathbf{d}}
\newcommand{\loss}{\boldell}
\newcommand{\perturb}{\mathbf{p}}
\newcommand{\perturbalt}{\mathbf{q}}
\newcommand{\reals}{{\mathbb{R}}}
\newcommand{\fplparm}{{\alpha}}
\newcommand{\lone}[1]{{\|{#1}\|_1}}
\newcommand{\Exp}[2]{{\mathbb{E}_{#1}\left[{#2}\right]}}
\newcommand{\expect}[2]{{\mathbb{E}_{#1}[{#2}]}}
\newcommand{\given}{\mathbin{\vert}}
\newcommand{\trans}[1]{{#1^{\top}}}
\newcommand{\prsim}[1]{{\Delta_{#1}}}
\newcommand{\uu}{\mathbf{u}}
\newcommand{\ww}{\mathbf{w}}
\newcommand{\wpolhat}{\mathbf{\hat{W}}}
\newcommand{\wwpol}{\mathbf{W}}
\newcommand{\uupol}{\mathbf{U}}
\newcommand{\wpol}{W}
\newcommand{\upol}{U}
\newcommand{\zz}{\mathbf{z}}
\renewcommand{\ss}{\mathbf{s}}
\newcommand{\vv}{\mathbf{v}}
\newcommand{\vpi}{\vv_\pi}
\newcommand{\valtpi}{{\vv_\altpi}}
\newcommand{\vpielt}[2]{{v_\pi({#1}, {#2})}}
\newcommand{\C}{{\cal C}}
\newcommand{\ubar}{{\overline{\uu}}}
\newcommand{\wwbar}{{\overline{\ww}}}
\newcommand{\vbar}{{\overline{\vv}}}
\newcommand{\sbar}{{\overline{\ss}}}
\newcommand{\vstar}{{{\vv}^*}}
\newcommand{\regret}{{\algerr}}
\newcommand{\regin}{{\delta}}
\newcommand{\Tout}{{N_{out}}}
\newcommand{\Tin}{{N_{in}}}
\newcommand{\minmaxbudget}{{N}}
\newcommand{\brackets}[1]{{\left[{#1}\right]}}
\newcommand{\parens}[1]{{\left({#1}\right)}}
\newcommand{\paren}{\parens}
\newcommand{\abs}[1]{{\left|{#1}\right|}}
\newcommand{\ceiling}[1]{{\left\lceil{#1}\right\rceil}}
\newcommand{\lensq}[1]{{\|{#1}\|^2}}
\newcommand{\len}[1]{\|{#1}\|}
\newcommand{\approxproject}{{\ensuremath{\mathtt{ApproxProject}}\xspace}}
\newcommand{\approxfactor}{{\alpha}}
\newcommand{\costvec}{\mathbf{c}}
\newcommand{\cost}{{c}}
\newcommand{\sectref}[1]{Section~\ref{#1}}
\newcommand{\compactitemize}{\itemsep0pt \parskip0pt \parsep0pt}
\definecolor{MyBlue}{RGB}{95,166,180}
\renewcommand*{\thanks}[1]{%
  \footnotemark
  \protected@xdef\@thanks{\@thanks
    \protect\footnotetext[\arabic{footnote}]{#1}}%
}
\title[Contextual Dueling Bandits]{Contextual Dueling Bandits}
\thanks{On leave from Princeton University.} \Email{schapire@microsoft.com}\\
\thanks{Part of this research was conducted during an internship with Microsoft Research.} \Email{m.zoghi@uva.nl}\\
\begin{document}

\maketitle

\begin{abstract}
We consider the problem of learning to choose actions using
contextual information when
provided with limited feedback in the form of relative pairwise
comparisons.
We study this problem in the dueling-bandits framework of
\citet{yue09:k-arm}, which
we extend to incorporate context.
Roughly, the learner's goal is to find the best policy, or way of
behaving, in some space of policies, although ``best'' is not always
so clearly defined.
Here, we propose a new and natural solution concept, rooted in game theory,
called a
\emph{von Neumann winner}, a randomized policy that beats or ties
every other policy.
We show that this notion overcomes important
limitations of existing solutions, particularly the Condorcet winner which
has typically been used in the past, but which requires strong and often
unrealistic assumptions.
We then present three {efficient} algorithms for online learning in our setting,
and for approximating a von Neumann winner from batch-like
data.
The first of these algorithms achieves particularly low regret, even
when data is adversarial, although its time and space requirements are
linear in the size of the policy space.
The other two algorithms require time and space only
logarithmic in the size of the policy space when provided access to an
oracle for solving classification problems on the space.
\end{abstract}

\begin{keywords}
contextual dueling bandits, online learning, bandit algorithms, game theory.
\end{keywords}

\section{Introduction}
\label{sec:intro}

We study how to learn to act
based on contextual
information when provided only with partial, relative feedback.
This problem naturally arises in information retrieval (IR) and recommender systems, where the user feedback is considerably more reliable when interpreted as 
relative comparisons rather than absolute labels
\citep{radlinski2008:how}.
For instance, in web search, for a particular query, the IR system may
have several candidate rankings of documents that could be presented,
with the best option being dependent upon the specific user.
By presenting a mix or interleaving of two of the candidate rankings and
observing the user's response \citep{chapelle12:large,hofmann13:fidelity},
it is possible for such a system to get feedback about user preferences.
However, this feedback is partial since it is only
with respect to the two rankings that were chosen, and it is relative
since it only tells which of the two rankings is
preferred to the other.

The {\em dueling-bandits problem} of \citet{yue09:k-arm} formalizes
this setting.
Abstractly,
the learner is repeatedly faced with a set of possible actions,
and may select two of these actions to face off in a {\em duel}
whose stochastically determined winner is then revealed.
Through such experimentation, the learner attempts to find the
``best'' of the actions.

In this paper,
we focus on the \emph{contextual dueling bandit} setting, where context can provide information that helps identify the best action.
For instance,
in the example above, the actions may be the candidate
rankings to choose among, and the context may be
additional information about the user or query that might help in
choosing the best ranking.
The learner's goal now is to find a good {\em policy}, a rule for choosing
actions based on context.

Similar to prior work
on contextual (non-dueling)
bandits~\citep{EXP4,LangfordZh07,dudik2011efficient,AgarwalEtAl14},
we propose a setting in which
the learner has access to a space of policies $\Pi$, with the goal of
performing as well as the ``best'' in the space.
This space plays a role analogous to the hypothesis space in
supervised learning.
It will typically be extremely large or even infinite.
We therefore explicitly aim for methods that will be applicable
when this is the case.

Merely defining the precise goal of learning can be problematic
in such a relative-feedback setting.
When rewards are absolute, the best policy in $\Pi$ is clearly and easily
defined as the one that achieves the highest expected reward, because, by
such an absolute measure, this policy beats every other policy.
In a relative-feedback setting, since we have a means of obtaining pairwise
comparisons between actions or policies,
we might aim to choose the policy
in $\Pi$ that (on average)
beats every other policy in the class in such head-to-head
competitions.
Most previous work on dueling bandits~\citep{yue09:k-arm,yue2011beat,Urvoy:2013,RUCB2014} has in fact explicitly or
implicitly assumed that such a {\em Condorcet winner\/}
exists.
But there are good reasons to doubt such a strong assumption,
particularly when working with large and rich policy spaces.
There are numerous examples, even in natural situations, where this
assumption (and more generally, transitivity among policies) is known
to fail (see, for instance, \citealp{gardner:1970,RCS2014}).
Indeed, the preferences of a population of users do not need to be
transitive, even if each individual user has transitive preferences.

In this paper, we seek to improve the dueling bandits techniques in two respects.
First, we seek to relax the modeling restrictions on which previous
methods have depended so as to develop methods that are more generally
applicable.
Second, we seek to achieve a similar level
of flexibility in the design of policies
as for supervised learning algorithms.

\xhdr{Contributions.}
Our first contribution (in \sectref{sec:vonneumann}) is the introduction of a new
solution concept, called the \emph{von Neumann winner}, which is based
on a game-theoretic interpretation.
Like a Condorcet winner, when facing any other policy in a duel, a von Neumann winner has at least a 50\% chance of winning; in this sense, a von Neumann winner is at least as good as every policy in the space.
On the other hand, a von Neumann winner is {\em always\/} guaranteed to
exist, without any extraneous assumptions.
This guarantee is made possible by allowing
policies to be selected in a {\em randomized\/} fashion, as is quite natural in such a
learning setting.

With the goal of learning clarified, we turn to algorithms.
As a warm-up, in \sectref{sec:exp4-spar}, we give a fully online algorithm in which
two copies of the \expfp\ multi-armed bandit algorithm~\citep{beygelzimer11:epoch}
are run against one another
(using a ``sparring'' approach previously suggested by~\citealp{Ailon:2014}).
Although yielding good regret,
this algorithm requires time and space linear in $\picard$, which is impractical
in most realistic settings where we would expect $\Pi$ to be enormous.

To address this difficulty, we propose an approach
used previously in other works on contextual bandits
\citep{LangfordZh07,dudik2011efficient,AgarwalEtAl14}.
Specifically, we assume that
we have access to a \emph{classification oracle} for our policy
class that can find the minimum-cost policy in $\Pi$ when given
the cost of each action on each of a sequence of contexts.
In fact, an ordinary cost-sensitive, multiclass classification
learning algorithm can be used for this purpose, which suggests that,
practically, this may be a reasonable and natural assumption.

We then consider techniques for constructing a von Neumann winner from
empirical exploration data.
(Although we focus on a batch-like setting, the resulting algorithms can be used online as well.)
We analyze the statistical efficiency of this approach in \sectref{sec:explore-first}.
In Sections \ref{sec:FPL} and \ref{sec:PGD}, we  give two polynomial-time algorithms for
computing an approximate von Neumann winner from data: one based on
\citeauthor{kalai03:fpl}'s Follow-the-Perturbed-Leader algorithm~\citeyearpar{kalai03:fpl}, and the other based
on projected gradient ascent as studied by ~\citet{Zinkevich03:online}.
These techniques yield learning algorithms that approximate or perform as well as the von Neumann winner, using data, time, and space that only depend
logarithmically on the cardinality of the space $\Pi$, and therefore,
are applicable even with huge policy spaces.


\xhdr{Other related work.}
Numerous algorithms have been proposed for the (non-contextual) dueling bandits problem: Interleaved Filter \citep{yue09:k-arm}; Beat the Mean (BTM) \citep{yue2011beat}; Sensitivity Analysis of VAriables for Generic Exploration (SAVAGE) \citep{Urvoy:2013}; Relative Confidence Sampling \citep{RCS2014}; Relative Upper Confidence Bound (RUCB) \citep{RUCB2014}; Doubler, MultiSBM and Sparring \citep{Ailon:2014} and mergeRUCB \citep{mergeRUCB2015}. These methods impose various constraints on the problem at hand, ranging from the requirement that it arise from an underlying utility function (e.g., MultiSBM) to no constraint at all (e.g., SAVAGE); they mainly provide regret bounds that are logarithmic in the number of rounds, and at least linear in the number of actions.
In principle, these methods could be applied to contextual dueling bandits by treating policies as actions.
But this would lead to regret at least linear in the number of policies which is far worse than the logarithmic bounds obtained in this paper.

The method that is the most closely related to our work is Dueling Bandit Gradient Descent (DBGD) \citep{Yue:2009}, a policy gradient method that iteratively improves upon the current policy by conducting comparisons with nearby policies, assuming that the policy space comes equipped with a distance metric, and incrementally adapting the policy if a better alternative is encountered.
As with all local optimization methods, DBGD imposes a convexity assumption on the dueling bandit problem for its performance guarantee: the dueling bandit problem is assumed to arise from the noisy observations of an underlying convex objective function. In this paper, we both relax the assumptions imposed by DBGD and improve upon the regret bound.

\section{Dueling bandits and the von Neumann winner}
\label{sec:vonneumann}

In the \emph{dueling bandits problem}~\citep{yue09:k-arm},
the learner has access to $K$ possible {\em actions},
$1,\ldots,K$, and attempts to determine the ``best'' action through
repeated stochastic pairwise comparisons of actions, called {\em duels}.
Thus, at each time step, the learner
chooses a pair of actions $(a,b)$ for a duel;
the outcome of the duel is $+1$ if $a$ wins, and $-1$ if $b$ wins.
The (unknown) expected value of this outcome is denoted $\prefab$, and is assumed
to depend only on the selected pair $(a,b)$.
In other words, the probability that $a$ beats $b$ in a duel is
$(\prefab + 1)/2$, and the two actions are exactly evenly matched if
$\prefab=0$.
We say that {\em $a$ beats $b$} to mean that the chance of
$a$ winning a duel with $b$ is strictly greater than $1/2$;
similarly, {\em $a$ ties $b$} if this probability is exactly $1/2$.

The $K\times K$ matrix $\pref$ of all such expectations $\prefab$ is called the
\emph{preference matrix}.\footnote{In the literature, the preference matrix $\pref$ often refers to the matrix of probabilities $(\prefab+1)/2$. With our modification, $\prefab$ becomes anti-symmetric around $0$, which simplifies the arguments considerably.}
This matrix is initially unknown to the learner, but can be discovered
bit-by-bit through experimentation.
We assume, of course, that all of the entries of $\pref$ are in
$[-1,+1]$, and furthermore, that $\pref$ is {\em skew-symmetric},
meaning that $\trans{\pref}=-\pref$ so that a duel
$(b,a)$ is equivalent to (the negation of) a duel $(a,b)$.
(This also implies $\prefaa=0$ for every action $a$, as is natural.)
Other than this, we strenuously avoid making any assumptions in the
current work about the matrix $\pref$.
For instance, we do {\em not\/} make any assumptions regarding
transitivity among the various actions.

In such a relative-feedback setting, the ``best'' action
is not always well-defined because there is no measure of the absolute
quality of actions.
Existing work typically assumes the existence of a \emph{Condorcet
winner} \citep{Urvoy:2013,RUCB2014}, that is, an action $\astar$ that
beats every other action $a\neq \astar$.
This is a very natural definition from a preference learning
perspective, since $\astar$ is indeed preferred to every other
action. However, it has been shown that dueling bandit problems
without Condorcet winners arise regularly in
practice~\citep{RCS2014}.%
\footnote{See also Appendix~\ref{sec:non-condorcet} for more compelling evidence that this is indeed the case.}

Although there is no guarantee of a {\em single\/} action beating all others,
the situation changes considerably if we simply allow actions
to be selected in a {\em randomized\/} fashion.
With this natural relaxation, the problem of non-existence entirely
vanishes.
Thus, the idea is to find a probability vector $\ww$ in $\prsim{K}$
(where $\prsim{K}$ is the simplex of vectors in $[0,1]^K$ whose entries
sum to~$1$)
such that
\begin{equation}  \label{eqn:v1}
\textstyle  \sum_{a=1}^K  w(a) \prefab \geq 0
\quad \text{for all actions $b$}.
\end{equation}
{In words, for every action $b$, if $a$ is selected randomly according
to distribution $\ww$, then the chance of beating $b$ in a %
duel is at least $1/2$.}
(Note that this property implies that the same will be true
if $b$ is itself selected in a randomized way.)
A distribution $\ww$ with this property is said to be
a {\em von Neumann winner} for the preference matrix $\pref$.

As the name reflects, this notion is intimately connected to a
game-theoretic interpretation.
Indeed, we can view preference matrix $\pref$ as describing a
{\em zero-sum matrix game}.
In such a game, the two players simultaneously choose
distributions (or {\em mixed strategies})
$\ww$ and $\uu$ over rows and columns, respectively,
yielding a gain to the row player of $\trans{\ww}\pref\uu$.
According to von Neumann's celebrated minmax theorem,
for any matrix $\pref$,
\[
  \max_{\ww\in\Delta_K} \min_{\uu\in\Delta_K}  \trans{\ww}\pref\uu
  =
  \min_{\uu\in\Delta_K} \max_{\ww\in\Delta_K}  \trans{\ww}\pref\uu,
\]
the common value being the {\em value\/} of the game $\pref$.
A {\em maxmin strategy} $\ww$ or a {\em minmax strategy} $\uu$
is one realizing the $\max$ or $\min$ on the left- or right-hand side of this equality, respectively.
Finding these strategies is called {\em solving the game}.

We have assumed that the matrix $\pref$ is itself skew-symmetric,
so the game it describes is a
{\em symmetric game}.
Such games are known to have value exactly equal to zero (see, for instance,
\citealp[Theorem~II.6.2]{owen1995game}).
Working through definitions,
this means that $\ww$ is a maxmin strategy if and only if
$
   \min_{\uu\in\prsim{K}}  \trans{\ww}\pref\uu \geq 0.
$
But this is exactly equivalent to \eqref{eqn:v1}. Therefore:

\begin{proposition}  \label{prop:1}
A probability vector $\ww\in \prsim{K}$ is a von Neumann winner for
preference matrix $\pref$ if and only if
it is a maxmin strategy for the game $\pref$.
Consequently, every preference matrix $\pref$ has a von Neumann winner.
\end{proposition}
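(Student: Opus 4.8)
The plan is to make precise the chain of equivalences already sketched immediately before the statement, supplying the two small arguments that the informal ``working through definitions'' glosses over. The single nontrivial input is recorded first: since $\pref$ is skew-symmetric the game it defines is symmetric, and (as noted above) a symmetric game has value $0$. Everything else is an unpacking of definitions.

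First I would characterize maxmin strategies through the inner minimization. Because the value of the game equals $\max_{\ww\in\Delta_K}\min_{\uu\in\Delta_K}\trans{\ww}\pref\uu = 0$, a strategy $\ww$ realizes this outer maximum — that is, is a maxmin strategy — exactly when $\min_{\uu\in\Delta_K}\trans{\ww}\pref\uu = 0$. Moreover, for \emph{every} $\ww$ this inner minimum is at most the value $0$, so the equality $\min_{\uu}\trans{\ww}\pref\uu = 0$ is equivalent to the one-sided inequality $\min_{\uu}\trans{\ww}\pref\uu \geq 0$. This yields the first link: $\ww$ is a maxmin strategy if and only if $\min_{\uu\in\Delta_K}\trans{\ww}\pref\uu \geq 0$.

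Second I would reduce the randomized opponent to a pure one. Fixing $\ww$, the map $\uu\mapsto\trans{\ww}\pref\uu = \sum_b u(b)\bigl(\sum_a w(a)\prefab\bigr)$ is linear in $\uu$, so its minimum over the simplex $\Delta_K$ is attained at a vertex, i.e., at a point mass on some single action $b$; hence $\min_{\uu}\trans{\ww}\pref\uu = \min_b \sum_a w(a)\prefab$. Consequently $\min_{\uu}\trans{\ww}\pref\uu \geq 0$ holds if and only if $\sum_a w(a)\prefab \geq 0$ for every action $b$, which is precisely the defining condition \eqref{eqn:v1} of a von Neumann winner. Chaining this with the first link establishes the stated ``if and only if.'' For the ``consequently'' clause I would note that $\trans{\ww}\pref\uu$ is continuous and $\Delta_K$ is compact, so the outer maximum is attained; the maximizer is a maxmin strategy, hence a von Neumann winner, so one always exists.

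I expect the only step needing genuine care to be the first one: correctly using that the value is \emph{exactly} $0$ to pass between ``$\ww$ attains the maxmin value'' and the clean inequality $\min_{\uu}\trans{\ww}\pref\uu \geq 0$. Without skew-symmetry of $\pref$ the right-hand side of \eqref{eqn:v1} would have to be the (possibly nonzero) game value rather than $0$, so it is the symmetry of $\pref$ that makes the von Neumann-winner condition and the maxmin condition literally coincide rather than merely correspond up to a shift.
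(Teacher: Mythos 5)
Your proposal is correct and follows essentially the same route as the paper, which establishes the proposition in the paragraph preceding it: skew-symmetry makes the game symmetric, hence of value zero, so maxmin strategies are exactly those $\ww$ with $\min_{\uu}\trans{\ww}\pref\uu\geq 0$, which by linearity is condition \eqref{eqn:v1}. You have merely filled in the details the paper compresses into ``working through definitions,'' and done so correctly.
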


Before continuing, we briefly mention some of the
other solution concepts that have been proposed to remedy
the potential non-existence of a Condorcet
winner~\citep{schulze11:monotonic}.
Two of these are the Borda winner, the action
that has the highest probability of winning a duel against a
uniformly random action;
and the Copeland winner, the action that wins the most
pairwise comparisons~\citep{Urvoy:2013}.
Both of these fail the \emph{independence of
  clones criterion} \citep{schulze11:monotonic}, meaning that adding
multiple identical copies of an action can change the Borda or
Copeland winner.
This criterion is particularly crucial in a dueling bandit setting
because a given policy class may contain many identical policies.
In contrast, the von Neumann winner performs at least as
well as any individual policy, and is thus unaffected by the presence or
absence of clones. See Appendix~\ref{sec:copeland} for a more detailed discussion.
Note that if there does happen to exist a Condorcet winner, then it
will also be the unique von Neumann winner.


\section{Incorporating context}
\label{sec:context}

Next, we consider how the preceding development can be extended to a
much more realistic setting in which the best way of acting may depend
on additional, observable information, called {\em context}.
Thus, prior to choosing actions, the learner is allowed
to observe some value $x$, the {context}, selected by Nature from
some unspecified space $\X$.
For instance, $x$ might be a feature-vector description of a web user.
In this setting, the preference matrix is no longer static;
rather, which actions are better than which others now varies and depends on
the context, which therefore must be taken
into account to fully optimize the choice of actions.

Formally, we assume that on every round $t$ of the learning
process, a context $x_t$ and preference matrix $\preft$ are chosen by Nature.
The context $x_t$ is revealed to the learner, but the preference
matrix $\preft$ remains hidden.
Based on $x_t$, the learner selects two actions $(a_t,b_t)$ for a
duel, whose outcome has expectation determined by the current (hidden)
preference matrix $\preft$ in the usual way.
Except where noted otherwise, in this paper, we always assume that
each pair $(x_t,\preft)$ is chosen at random according to independent
draws from some unknown joint distribution $\Dpair$.

The goal is to determine which action to select as
a function of the context.
Such a mapping $\pi$ from contexts $x$ to actions $a$ is called a
{\em policy}.
Typically, we are working with policies of a particular form, that is,
from some {\em policy space} $\Pi$.
For instance, this space might
represent the set of all decision trees.
For simplicity, we assume that $\Pi$ has finite cardinality.
However, we generally think of $\Pi$ as an
extremely large space, exponential in any reasonable measure of
complexity.

The notion of von Neumann winner (as well as other concepts, like
Condorcet winner) can be extended to incorporate context
essentially by reducing to the non-contextual setting.
We can regard each policy $\pi$ as a
``meta-action,'' and define a $\picard\times\picard$ preference matrix
$\Msup$ over these meta-actions.
Thus, the rows and columns of $\Msup$ are each
indexed by policies in $\Pi$,
and
\begin{equation}  \label{eq:f6}
  \Msuppipi = \Exp{(x,\pref)\sim\Dpair}{\prefelt{\pi(x)}{\altpi(x)}}.
\end{equation}
This quantity is the expected outcome
when a ``meta-duel'' is held between the two {\em policies} $\pi$ and $\altpi$,
whose stochastic outcome is determined by randomly selecting $(x,\pref)\sim\Dpair$,
and then holding an ordinary duel on $\pref$ between
the actions $\pi(x)$ and $\altpi(x)$.
This huge matrix $\Msup$ thus encodes the probability of any policy
beating any other policy in a duel.

We can now define von Neumann winner in the contextual setting
to be an (ordinary) von Neumann winner for the matrix $\Msup$
(regarded here as a kind of ``meta-preference-matrix'').
Thus, unraveling definitions, a (contextual) von Neumann winner
is a probability distribution $\wwpol$
over \emph{policies} such that for every opposing policy $\altpi$,
if $\pi$ is chosen at random from
$\wwpol$, then the probability that $\pi$ beats $\altpi$
in a duel is at least $1/2$.
That is, the {\em randomized\/} policy defined by $\wwpol$ beats or ties every policy in the space $\Pi$.
By Proposition~\ref{prop:1}
(applied to $\Msup$), such a von Neumann winner must exist.

For the rest of the paper, we study how to
compute (or approximate) contextual von Neumann winners.
Of course, because the space $\Pi$ and corresponding matrix $\Msup$
are both gigantic, this will present significant computational
challenges.

\section{Learning scenarios}
\label{sec:scenarios}

We consider two possible learning scenarios.

In the simpler of these, called {\em explore-then-exploit}, we suppose
that the learner is allowed to explore for some number of rounds
$\numbatch$
(where, as described above, on each round, the learner is presented
with a random context and permitted to run and observe the outcome of a duel
between a pair of actions of its choosing).
At the end of these $\numbatch$ rounds, the learner outputs a distribution
$\wpolhat$ over policies in $\Pi$.
The learner's goal is to produce $\wpolhat$ which is an
{\em $\varepsilon$-approximate von Neumann winner}, that is, for which
\[
  \min_{\uupol\in\prsim{\picard}} \trans{\wpolhat} \Msup \uupol \geq -\varepsilon
\]
for some small $\varepsilon>0$.
In other words, for all $\pi\in\Pi$, $\wpolhat$ should beat $\pi$ with
probability at least $1/2-\varepsilon/2$.
Naturally, $\numbatch$ should be ``reasonable'' as a function
$\varepsilon$.
This setting is almost like learning from a passively selected
batch of training examples, except that the learner has an active
role in selecting which actions to play in each duel.

In the alternative {\em full-explore-exploit\/} setting, learning
occurs in a fully online manner across $T$ rounds (in the manner
described earlier), with performance
measured using some notion of {\em regret}.
In this paper, where we are working with policies and changing
preference matrices,
we propose to define regret to be
\begin{equation} \label{eq:rd1}
  \max_{\pi\in\Pi}  \frac{1}{2}
       \sum_{t=1}^T \bigl[ \prefeltt{\pi(x_t)}{a_t} + \prefeltt{\pi(x_t)}{b_t} \bigr].
\end{equation}
If we can find an algorithm for which this regret is $o(T)$, then
eventually the algorithm selects actions $(a_t,b_t)$ which cannot be
beaten by any other policy $\pi\in\Pi$.

In the standard dueling-bandits setting with a static preference
matrix, a seemingly different definition of regret was used
by \citet{yue09:k-arm} in terms of an assumed Condorcet winner.
However, when specialized to their setting, and when provided with
their same assumptions, their definition can be shown to be equivalent
(up to constant factors) to \eqref{eq:rd1}.

\section{Sparring \expfp}
\label{sec:exp4-spar}

Our goal then is to find, approximate or perform as well as a
von Neumann winner, which, as we have seen, is a maxmin strategy for a
particular game.
Under this interpretation, it becomes especially natural to use
ordinary no-regret learning algorithms as players of this game since
it is known that such algorithms, when properly configured for this
purpose, will converge to maxmin or minmax
strategies~\citep{FreundSc-GEB}.
The idea is simply to run two independent copies of such an algorithm
against one another.
Such a ``sparring'' approach was previously proposed for dueling
bandits by \cite{Ailon:2014}, though without details, and not in the
contextual setting.

We consider using the multi-armed bandit algorithm \expfp~\citep{beygelzimer11:epoch}
for this purpose in the full-explore-exploit setting.
\expfp\ is well-suited since it is
designed to work with partial information as in our bandit setting,
and since it can handle the kind of adversarially generated data that
arises unavoidably when playing a game.
It also is designed to work with policies in a contextual setting like
ours (or, more generally, to accept the advice of ``experts'').

The learning setting for \expfp\ is as follows (somewhat,
but straightforwardly, modified for our present purposes).
There are $K$ possible actions, $1,\ldots,K$, and a finite space $\Pi$
of policies.
On each round $t=1,\ldots,T$, an adversary chooses and reveals a context
$x_t$, and also chooses, but does {\em not} reveal rewards
$r_t(1),\ldots,r_t(K)\in [-1,+1]$ for each of the $K$ actions.
The learner then selects an action $a_t$, and receives the revealed reward $r_t(a_t)$.
The learner's total reward is thus
  $\GA = \sum_{t=1}^T  r_t(a_t)$,
while the reward of each policy $\pi$ is
  $\Gpi = \sum_{t=1}^T  r_t(\pi(x_t))$.
The learner's goal is to receive reward close to that of the best
policy.
\cite{beygelzimer11:epoch} prove that (subject to very benign conditions)
with probability at least $1-\delta$,
\expfp\ achieves reward
\begin{equation}  \label{eq:exp4-g1}
    \GA \geq \max_{\pi\in\Pi}\; \Gpi - 12\sqrt{KT \ln(\picard/\delta)}.
\end{equation}
(This holds for any $\delta>0$; the $\delta$ is passed as a parameter to the algorithm.)

For contextual dueling bandits,
we run two separate copies of \expfp\ which are played against one another; let us call them
\exprow\ and \expcol.
We use the same actions, contexts, and policies for the two copies as for the original problem.
On each round $t$, Nature chooses a context $x_t$ and a preference
matrix $\preft$.
The context (but not the preference matrix) is revealed to \exprow\ and \expcol, which
select actions $a_t$ and $b_t$, respectively.
A duel is then held between these two actions; the outcome $r$ is
passed as feedback to \exprow\ (for its chosen action $a_t$),
and its negation $-r$ is similarly passed to \expcol. We call this algorithm \SparringEXP.

\begin{theorem}
\label{thm:sparring-exp4}
Consider $K$ actions, policy space $\Pi$, and time horizon $T$. Fix parameter $\delta>0$.
Then with probability at least $1-\delta$, \SparringEXP achieves regret at most
$
    O(\sqrt{KT \ln(\picard/\delta)})
$.
\end{theorem}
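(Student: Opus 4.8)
The plan is to invoke the \expfp\ guarantee \eqref{eq:exp4-g1} separately for each of the two sparring copies in \SparringEXP, treating the opposing copy together with the stochastic duels as an adaptive adversary, and then to combine the two bounds so that the unknown match value cancels. To make the counterfactual rewards well-defined, I first have Nature, on each round $t$, draw outcomes $o_t(a,b)\in\{-1,+1\}$ for \emph{all} ordered pairs of actions, skew-symmetrically (so $o_t(b,a)=-o_t(a,b)$ and $o_t(a,a)=0$) with $\mathbb{E}[o_t(a,b)\mid x_t,\preft]=\prefeltt{a}{b}$; only the single outcome $o_t(a_t,b_t)$ of the played duel is revealed, the rest serving purely as unobserved rewards in the analysis. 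For \exprow\ I declare the reward of action $a$ on round $t$ to be $o_t(a,b_t)$, and for \expcol\ the reward of action $b$ to be $o_t(b,a_t)=-o_t(a_t,b)$; both lie in $[-1,+1]$ and each is measurable given the opponent's round-$t$ action, so \eqref{eq:exp4-g1} applies to each copy even against this adaptively generated reward stream.

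Writing $V=\sum_{t=1}^T o_t(a_t,b_t)$ for the realized value of the match and $C=12\sqrt{KT\ln(3\picard/\delta)}$ (each copy run with confidence parameter $\delta/3$), the guarantee for \exprow\ reads $V\ge \max_\pi \sum_t o_t(\pi(x_t),b_t)-C$, i.e.\ $\sum_t o_t(\pi(x_t),b_t)\le V+C$ for every $\pi$; applying it symmetrically to \expcol, whose realized reward is $\sum_t o_t(b_t,a_t)=-V$, gives $\sum_t o_t(\pi(x_t),a_t)\le -V+C$ for every $\pi$. Adding these two inequalities, the value $V$ cancels and I obtain, simultaneously for all $\pi\in\Pi$,
\[
  \sum_{t=1}^T \bigl[o_t(\pi(x_t),a_t)+o_t(\pi(x_t),b_t)\bigr]\le 2C.
\]
This cancellation is the whole point of sparring: each copy's reward controls its regret only up to the common (and possibly $\Theta(T)$) match value $V$, and it is precisely because the two copies play complementary sides of the \emph{same} duels that $V$ disappears.

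It remains to pass from realized outcomes to the expected quantities $\prefeltt{\pi(x_t)}{a_t}$ and $\prefeltt{\pi(x_t)}{b_t}$ appearing in the regret \eqref{eq:rd1}. For each fixed $\pi$ the differences $o_t(\pi(x_t),b_t)-\prefeltt{\pi(x_t)}{b_t}$ form a bounded martingale-difference sequence with respect to the natural filtration (the conditional mean of $o_t$ is exactly $\prefeltt{\pi(x_t)}{b_t}$), so Azuma--Hoeffding bounds $\bigl|\sum_t o_t(\pi(x_t),b_t)-\sum_t \prefeltt{\pi(x_t)}{b_t}\bigr|$ by $O(\sqrt{T\ln(\picard/\delta)})$; a union bound over the $\picard$ policies and the two sides makes this hold simultaneously for all $\pi$ and both sums. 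Combining with the displayed inequality, allocating the remaining $\delta/3$ to the concentration event, and dividing by two yields
\[
  \max_{\pi\in\Pi}\tfrac12\sum_{t=1}^T\bigl[\prefeltt{\pi(x_t)}{a_t}+\prefeltt{\pi(x_t)}{b_t}\bigr]
  \le C+O\!\bigl(\sqrt{T\ln(\picard/\delta)}\bigr)=O\!\bigl(\sqrt{KT\ln(\picard/\delta)}\bigr),
\]
which is the claimed regret.

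I expect the main obstacle to be rigor in the reduction rather than any calculation: one must check that \eqref{eq:exp4-g1} remains valid when the reward stream fed to one copy is produced adaptively by the other copy and by fresh duel randomness, and that the fictitious, never-played outcomes $o_t(a,b)$ can be introduced without disturbing either the algorithms' behavior (which depends only on the observed $o_t(a_t,b_t)$) or the independence structure the martingale argument needs. Once the two guarantees are in hand, the value-cancellation and the concentration step are routine.
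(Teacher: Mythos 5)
Your proposal is correct and follows essentially the same route as the paper's proof in Appendix~\ref{sec:pf-SparringEXP}: introduce counterfactual duel outcomes for all action pairs (defined conditionally independently of each copy's round-$t$ choice), apply the \expfp\ guarantee \eqref{eq:exp4-g1} to each sparring copy so that the realized match value cancels upon adding the two bounds, and then pass to the expected preferences via Azuma's inequality with a union bound over $\Pi$. The only differences are cosmetic (splitting $\delta$ into thirds rather than quarters, and imposing skew-symmetry directly on the sampled outcomes rather than invoking skew-symmetry of $\preft$ in the concentration step).
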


The proof is in Appendix~\ref{sec:pf-SparringEXP}. This result holds also for an adversarial environment in which the pairs $(x_t,\preft)$ are selected by an adversary rather than at random. Also, we can adapt this algorithm for explore-then-exploit learning using the following standard technique for online-to-batch conversion. Run \SparringEXP for $\numbatch$ exploration rounds. In each round $i$, \exprow\ internally computes a distribution $\ww_i$ over policies. Then $\wwbar = (1/\numbatch) \sum_{i=1}^\numbatch \ww_i$ is an \eps-approximate von Neumann winner where
$\eps=O(\sqrt{K\ln(\picard/\delta)/\numbatch})$.

Although yielding very good regret bounds and handling adversaries, this approach requires
time and space proportional to $\picard$, and is therefore not
practical for extremely large policy spaces.

\section{Explore-then-exploit algorithms with a classification oracle}
\label{sec:explore-first}

We next begin a development that will lead to efficient methods
(in terms of time, space and data) for handling
even extremely large policy spaces, under a particular assumption
discussed below.
We describe a general approach for exploration, for using the
collected data to find a statistically sound solution, and for
reducing the problem that must be solved to a more
tractable form.

We focus mainly on the explore-then-exploit problem.
Thus, we have $\numbatch$ exploration rounds, and on each round $i$, a pair $(x_i,\prefi)$ is selected
at random, and the learner is permitted to choose and observe the
outcome of a single duel $(a_i,b_i)$.
Although $x_i$ is observed, $\prefi$ is not.
Here, we propose a simple exploration strategy, called \emph{uniform exploration}, in which
each dueling pair $(a_i, b_i)$ is selected uniformly at random.
Let $r_i$ be the resulting observed outcome.
Based on these, the learner can obtain a noisy but unbiased
version of the hidden preference matrix $\prefi$.
Specifically, let us define a matrix $\eprefi$ where $\eprefiaibi=K^2 r_i$,
and all other entries $\eprefiab$ are set to zero.
It can be verified that the expected value of each entry
$\eprefiab$ is exactly $\prefiab$; that is,
$\expect{}{\eprefi \given \prefi} = \prefi$.

In Appendix~\ref{appendix-proof-perceptron}, we extend our setting to an arbitrary unbiased estimator $\eprefi$ of $\prefi$, and in particular to an arbitrary exploration strategy that does not change adaptively over time. This extension is parameterized by upper bounds on the absolute value and the variance of $\eprefiab$ for all rounds $i$ and all actions $(a,b)$. For uniform exploration, both upper bounds are $K^2$.


While non-adaptive exploration strategies usually lead to suboptimal statistical performance, they are often preferable in practice. This is because in large-scale industrial applications the existing infrastructure is often insufficient to support a feedback loop that would update the exploration strategy adaptively over time, and upgrading the infrastructure may be infeasible in the near term.

\xhdr{Statistical guarantees.}
With these noisy versions of the empirical preference matrices, we can
estimate the expected outcome in a ``meta-duel'' between two policies
$\pi$ and $\altpi$, that is, an entry of the matrix $\Msup$ defined
in \eqref{eq:f6}.
In particular, let
\begin{equation} \label{eq:f3}
 \Memppipi = \frac{1}{\numbatch}
             \sum_{i=1}^{\numbatch} \eprefielt{\pi(x_i)}{\altpi(x_i)}.
\end{equation}
Then the expected value of this quantity is $\Msuppipi$, the
corresponding entry of $\Msup$.
Moreover, using Bernstein's inequality and the union bound, we can
show that, with probability at least $1-\delta$,
\begin{equation}  \label{eq:f2}
   \abs{\Memppipi - \Msuppipi} \leq \empmatbnd
   \;\;\;\;\mbox{for all $(\pi,\altpi)\in\Pi\times\Pi$,}
\end{equation}
where
$\empmatbnd =
    O\bigl(K
        \sqrt{{\ln(\picard/\delta)}/{\numbatch}}\bigr)
$.
\footnote{The proof for \eqref{eq:f2} and the subsequent Lemma~\ref{lem:explore-first} are in Appendix~\ref{sec:pf-ExploreFirst}.}
Thus, although huge, the matrix $\Msup$ is well-approximated by the
matrix $\Memp$ using only a moderately sized sample.
In fact, to find an approximate maxmin strategy for $\Msup$ it suffices to find one for $\Memp$,
which will be the approach taken by our algorithms.


\begin{lemma}
\label{lem:explore-first}
Given the set-up above, suppose that \eqref{eq:f2} holds
(as will be the case with probability at least
$1-\delta$), and suppose further that
$\wpolhat\in\prsim{\picard}$ is a probability vector for which
\[
\min_{\uupol\in\prsim{\picard}} \trans{\wpolhat} \Memp \uupol
 \geq
\max_{\wwpol\in\prsim{\picard}} \min_{\uupol\in\prsim{\picard}} \trans{\wwpol} \Memp \uupol
- \algerr.
\]
Then $\wpolhat$ is a $(2\empmatbnd + \algerr)$-approximate von Neumann
winner for $\Msup$.
\end{lemma}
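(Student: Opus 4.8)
The plan is to treat this as a stability (robustness) argument for the value of a matrix game: because $\Memp$ approximates $\Msup$ entrywise, any approximately optimal solution of the game $\Memp$ is automatically an approximately optimal solution of the game $\Msup$, with the error degrading only additively. First I would record the single elementary consequence of \eqref{eq:f2} that does all the work: for \emph{any} pair of mixed strategies $\wwpol,\uupol\in\prsim{\picard}$,
\[
  \abs{\trans{\wwpol}\Memp\uupol - \trans{\wwpol}\Msup\uupol}
  = \abs{\sum_{\pi,\altpi} W(\pi)\,\bigl(\Memppipi-\Msuppipi\bigr)\,U(\altpi)}
  \le \empmatbnd ,
\]
since the left-hand side is a convex combination of the entrywise deviations, each bounded by $\empmatbnd$ under \eqref{eq:f2}. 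Thus swapping $\Msup$ for $\Memp$ inside any bilinear form costs at most $\empmatbnd$.

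Second, I would lower-bound the value of the empirical game. The matrix $\Msup$ is skew-symmetric, inheriting skew-symmetry from $\pref$ through \eqref{eq:f6} (indeed $\Msupelt{\altpi}{\pi}=-\Msuppipi$), so by the discussion preceding Proposition~\ref{prop:1} the game $\Msup$ is symmetric and has value $0$. Let $\wwpol^*$ be a maxmin strategy for $\Msup$, so that $\min_{\uupol\in\prsim{\picard}}\trans{(\wwpol^*)}\Msup\uupol=0$. Applying the perturbation bound to $\wwpol^*$ gives
\[
  \max_{\wwpol\in\prsim{\picard}}\min_{\uupol\in\prsim{\picard}}\trans{\wwpol}\Memp\uupol
  \ge
  \min_{\uupol\in\prsim{\picard}}\trans{(\wwpol^*)}\Memp\uupol
  \ge
  \min_{\uupol\in\prsim{\picard}}\trans{(\wwpol^*)}\Msup\uupol - \empmatbnd
  = -\empmatbnd .
\]
Note that this step uses only that $\Msup$ has value $0$; I do not need $\Memp$ itself to be skew-symmetric, which in general it is not, since the per-round estimators $\eprefi$ set only a single off-diagonal entry.

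Finally, I would chain the hypothesis on $\wpolhat$ with the two bounds above. Starting from the quantity to be controlled and applying the perturbation bound once more,
\[
  \min_{\uupol}\trans{\wpolhat}\Msup\uupol
  \ge
  \min_{\uupol}\trans{\wpolhat}\Memp\uupol - \empmatbnd
  \ge
  \Bigl(\max_{\wwpol}\min_{\uupol}\trans{\wwpol}\Memp\uupol - \algerr\Bigr) - \empmatbnd
  \ge
  -\empmatbnd - \algerr - \empmatbnd ,
\]
where the middle inequality is exactly the hypothesis on $\wpolhat$ and the last uses the empirical value bound $-\empmatbnd$ just established. This yields $\min_{\uupol\in\prsim{\picard}}\trans{\wpolhat}\Msup\uupol \ge -(2\empmatbnd+\algerr)$, i.e., $\wpolhat$ is a $(2\empmatbnd+\algerr)$-approximate von Neumann winner for $\Msup$, as claimed.

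There is no deep difficulty here: the argument is driven entirely by the value-zero property of the symmetric game $\Msup$ together with the uniform entrywise bound \eqref{eq:f2}. The only point requiring care — and the one easy to get wrong — is the bookkeeping of the $\pm\empmatbnd$ perturbations and the directions of the min/max inequalities: the perturbation bound must be invoked exactly twice (once to bound the empirical value below at $\wwpol^*$, and once to transfer $\wpolhat$'s guarantee from $\Memp$ back to $\Msup$), so that the errors accumulate to precisely $2\empmatbnd$ rather than $\empmatbnd$ or $3\empmatbnd$.
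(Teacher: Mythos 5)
Your proposal is correct and follows essentially the same route as the paper: both use the uniform bilinear perturbation bound $\abs{\trans{\wwpol}\Memp\uupol - \trans{\wwpol}\Msup\uupol}\le\empmatbnd$ twice, together with the fact that the skew-symmetric game $\Msup$ has value zero, to accumulate exactly $2\empmatbnd+\algerr$. Your explicit instantiation at a maxmin strategy $\wwpol^*$ of $\Msup$ is just an unpacked version of the paper's step comparing the two max--min values, so there is no substantive difference.
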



\xhdr{A more compact version of the problem.}
Our aim now is to find an approximate maxmin strategy for the
matrix $\Memp$.
Although this matrix is gigantic in both dimensions, by leveraging how
it was constructed from only a small number of empirical observations,
we can re-express the problem in a far more compact form.
To this end, let us define, for each policy $\pi\in\Pi$,
a {\em policy vector} $\vpi\in\reals^{\numbatch K}$ that encodes the behavior of
$\pi$ on the exploration data.
For readability, although a vector, we index entries of $\vpi$ by
pairs $(i,a)$, where $i$ is a round and $a$ is an action,
and we define
\[
   \vpielt{i}{a} = \1{\pi(x_i) = a}/\sqrt{\numbatch}.
\]
Thus, $\vpi$ is broken into $\numbatch$ length-$K$ blocks, with
block $i$ encoding in a natural way the action selected by
$\pi$ on $x_i$.
(The constant $1/\sqrt{\numbatch}$ is for normalization.)

We also define an
$\numbatch K \times \numbatch K$
block-diagonal matrix $\bldiag$, where the
$\numbatch$ blocks along the diagonal are exactly the
$K\times K$ matrices
$\eprefi$ described above.
Formally, using the earlier indexing,
\[
 \bldiagelt{i}{a}{j}{b} = \1{i=j} \eprefielt{a}{b}.
\]

Working through these definitions, it can be verified that for any two
policies $\pi$ and $\altpi$, the quantity
$\trans{\vpi} \bldiag \valtpi$
is exactly equal to $\Memppipi$ as defined in \eqref{eq:f3}.
This means that if $\wwpol$ and $\uupol$ are probability vectors over $\Pi$, then
\[
  \trans{\wwpol} \Memp \uupol
   = \textstyle \trans{\parens{\sum_{\pi\in \Pi} \wpol(\pi)\, \vpi}}\; \bldiag\;\;
            {\parens{\sum_{\altpi\in \Pi} \upol(\altpi)\, \valtpi}}.
\]
Therefore, the problem of finding a maxmin strategy for $\Memp$ is
equivalent to solving

\vspace{-4mm}

\begin{equation} \label{eq:f4}
  \max_{\ww\in\C} \min_{\uu\in\C} \trans{\ww} \bldiag \uu
\end{equation}
where $\C$ is the convex hull of the set of all policy vectors
$\{\vpi : \pi\in\Pi\}$ (henceforth, the \emph{policy hull}).
Furthermore, a solution $\ww\in\C$ is necessarily a convex combination of
vectors $\vpi$, and therefore corresponds to a probability vector over
policies.

The formulation given in \eqref{eq:f4} shows that $\bldiag$ should
itself be viewed as a game matrix, and that our remaining goal is to
approximately solve this game.
This matrix has the advantage of being far smaller than $\Memp$.
However, unlike a conventional matrix game, the space from which the
players' vectors $\ww$ and $\uu$ are chosen is not the standard space
of probability vectors over actions, but rather the convex hull of an exponentially
large set of vectors.

\xhdr{Classification oracle.}
Our algorithms assume that the policy space $\Pi$ is structured
in a way that admits a certain computational operation
that is quite natural in the realm of learning.
Specifically, we assume the existence of a {\em classification oracle}.
The input to this oracle is a sequence of {\em cost vectors}
$\costvec_1,\ldots,\costvec_\numbatch$, each in $\reals^K$,
with the interpretation that $\cost_i(a)$ is the cost of choosing
action $a$ on context $x_i$.
The output of the oracle is the policy in $\Pi$ with minimum cost,
that is,
\begin{equation} \label{eq:f5}
  \argmin_{\pi\in\Pi} 
          \sum_{i=1}^{\numbatch}\; \cost_i(\pi(x_i)).
\end{equation}
Indeed, regarding the $x_i$'s as examples, the actions $a$ as
labels or classes, and the policies $\pi$ as classifiers, we see that
this oracle is in fact solving an empirical, cost-sensitive,
multi-class classification problem.
Thus, the assumption of such an oracle is an idealization based on the
numerous cases in which effective classification algorithms already
exist. In practice, the policy space $\Pi$ is usually defined as the space of all possible policies returned by a given classification algorithm,
and we hope that our methods will be effective when
using ordinary off-the-shelf classification algorithms as
oracle.

Equivalently, the classification oracle can be described in terms of
policy vectors.
Specifically, the cost vectors above can be identified
with their concatenation, a single vector $\costvec\in\reals^{\numbatch K}$,
divided naturally into $\numbatch$ blocks.
Then the problem given in \eqref{eq:f5} is the same as
\[
  \argmin_{\ww\in\C} \costvec\cdot\ww,
\]
where the $\argmin$ is over the policy hull $\C$ defined above. This is because the minimum, without loss of generality, will be a policy vector $\vpi$, where $\pi$ minimizes \eqref{eq:f5}.
Therefore, in what follows, we use expressions of this latter form
to indicate an invocation of our assumed classification oracle.

\xhdr{Algorithms and end-to-end guarantees.}
We design algorithms that compute an approximate von Neumann winner $\wpolhat$ by solving the optimization problem in \eqref{eq:f4}. Although there exist many methods for solving such a game, the challenge here is the requirement that the solution be in the policy hull $\C$. As already seen in \sectref{sec:exp4-spar}, regret minimization algorithms are a natural choice. However, most standard algorithms will not conform to this constraint. In the sections that follow, we provide two algorithms: Algorithm \SparringFPL that builds on the Follow-the-Perturbed-Leader algorithm of \citet{kalai03:fpl}, and Algorithm \ProjectedGD that builds on online projected gradient descent methods of \citet{Zinkevich03:online}.

\newcommand{\NCalls}{N}  
\newcommand{\PLength}{b} 

For a given approximation quality, the performance of either algorithm is characterized by several quantities: the sufficient number of exploration rounds, the running time, the storage requirement, and the number of policies in the support of $\wpolhat$. As it turns out, the key quantities are the number of exploration rounds and the number of oracle calls. We assume each oracle call returns both a policy vector and a corresponding policy, each representable using $\PLength$ bits.%
\footnote{Often, policies are specified as a parameter vector to some algorithm that implements them.
For finite classes, it is usually the case that $\PLength$ is roughly $O(\ln \picard)$.%
}
The solution $\wpolhat$ is specified by explicitly listing the probabilities and policies in its support.

\begin{theorem}\label{thm:main-eps}
Consider $K$ actions and a policy class $\Pi$ with $\PLength$-bit representation. Fix parameters $\eps,\delta>0$. Both \SparringFPL and \ProjectedGD compute an \eps-approximate von Neumann winner $\wpolhat$ with probability $1-\delta$ using
    $\numbatch= O((K^2/\eps^2)\ln(\picard/\delta))$
exploration rounds with uniform exploration strategy.
The number of oracle calls is
    $\NCalls = O((K^6/\eps^4)\ln(\picard/\delta))$   
for \SparringFPL and
    $\NCalls = O(K^8/\eps^4)$
for \ProjectedGD.
For both algorithms, disregarding oracle calls,
the running time is $O(\numbatch K\NCalls)$,
the storage requirement is $O(\PLength\NCalls)$, and
$\wpolhat$ is a distribution over at most $\NCalls$ policies.
 \end{theorem}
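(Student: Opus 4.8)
The plan is to prove both guarantees through a single three-layer argument, deferring the algorithm-specific work to a clean ``oracle-call budget'' that each of \SparringFPL and \ProjectedGD will supply in Sections~\ref{sec:FPL} and~\ref{sec:PGD}. The three layers are: (i) a \emph{statistical} layer that fixes $\numbatch$ so that the empirical game $\Memp$ is a good enough proxy for $\Msup$; (ii) a \emph{reduction} layer that rewrites the problem as the compact game \eqref{eq:f4} over the policy hull $\C$, for which the classification oracle is exactly a linear-optimization oracle; and (iii) a \emph{game-solving} layer that drives the optimization error on \eqref{eq:f4} below $\eps/2$. Layers (i) and (ii) are common to both algorithms; only (iii) differs.

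For the statistical layer, I would choose $\numbatch$ so that the uniform-exploration bound $\empmatbnd = O(K\sqrt{\ln(\picard/\delta)/\numbatch})$ of \eqref{eq:f2} satisfies $2\empmatbnd \le \eps/2$; solving for $\numbatch$ gives the stated $\numbatch = O((K^2/\eps^2)\ln(\picard/\delta))$, and by \eqref{eq:f2} the approximation event holds with probability at least $1-\delta$. Conditioning on this event, I would then invoke Lemma~\ref{lem:explore-first} with optimization accuracy $\eps/2$: any $\wpolhat\in\prsim{\picard}$ that is $\eps/2$-optimal for the maxmin problem on $\Memp$ is a $(2\empmatbnd + \eps/2)$-approximate von Neumann winner for $\Msup$, and $2\empmatbnd + \eps/2 \le \eps$. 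Because the game-solving in layer (iii) is carried out on the fixed data (\ProjectedGD deterministically; \SparringFPL using a regret bound for its internal perturbations that is independent of the data), the only source of failure is the event \eqref{eq:f2}, so the overall guarantee holds with probability at least $1-\delta$.

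The heart of the proof is layer (iii). Using the identity $\trans{\wwpol}\Memp\uupol = \trans{(\sum_\pi \wpol(\pi)\vpi)}\,\bldiag\,(\sum_\altpi \upol(\altpi)\valtpi)$ established just before \eqref{eq:f4}, solving the maxmin problem on $\Memp$ to accuracy $\eps/2$ is the same as solving \eqref{eq:f4} to accuracy $\eps/2$, with the returned $\ww\in\C$ automatically corresponding to a distribution over policies. I would solve \eqref{eq:f4} by the standard sparring of two no-regret players \citep{FreundSc-GEB}: the time-averaged plays of a row player maximizing and a column player minimizing form an approximate equilibrium whose gap is the sum of the two players' average regrets, so it suffices to run each player for $N$ rounds with per-player average regret at most $\eps/4$. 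For \SparringFPL, each player runs Follow-the-Perturbed-Leader \citep{kalai03:fpl}, whose update is a single perturbed call $\argmin_{\ww\in\C}\costvec\cdot\ww$ to the classification oracle; the regret bound must be instantiated for the loss vectors $\bldiag\uu$, whose entries scale with $K^2$ (the magnitude of the single nonzero entry in each block $\eprefi$). For \ProjectedGD, each player runs online projected gradient ascent/descent \citep{Zinkevich03:online}, where the (approximate) Euclidean projection onto $\C$ is implemented with the oracle via \approxproject. Propagating these regret bounds through the $\eps/4$ requirement yields the stated $\NCalls = O((K^6/\eps^4)\ln(\picard/\delta))$ for \SparringFPL and $O(K^8/\eps^4)$ for \ProjectedGD.

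Finally, I would account for resources. The total number of oracle calls is $\NCalls$ (one per round for \SparringFPL; a bounded number per approximate projection for \ProjectedGD), and the output $\wpolhat$ is the average of the row player's plays, each lying in $\C$ as a convex combination of oracle-returned policy vectors, hence supported on at most $\NCalls$ policies. Exploiting the block-diagonal sparsity of $\bldiag$ (one nonzero per block), each gradient $\bldiag\uu$ is cheap and each full iterate update in $\reals^{\numbatch K}$ costs $O(\numbatch K)$, for total time $O(\numbatch K\NCalls)$ excluding oracle calls; storing the $\NCalls$ policies and policy-vectors at $\PLength$ bits each dominates storage at $O(\PLength\NCalls)$. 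I expect the main obstacle to lie entirely in layer (iii): for \ProjectedGD, performing an \emph{approximate} projection onto the exponentially large hull $\C$ using only the linear-optimization oracle, and controlling its error so that it folds into the $\eps/4$ per-player budget; and for \SparringFPL, bounding the regret under the large dynamic range ($\sim K^2$) of the losses while keeping the perturbation-induced randomness from disturbing the clean $1-\delta$ statement.
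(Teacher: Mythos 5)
Your proposal follows essentially the same route as the paper: the same statistical layer (choose $\numbatch$ so that $\empmatbnd=O(K\sqrt{\ln(\picard/\delta)/\numbatch})$ makes Lemma~\ref{lem:explore-first} give an $\eps$ total error), the same reduction to the compact game \eqref{eq:f4} with the classification oracle as a linear-optimization oracle over $\C$, and the same game-solving layer (sparred \FPL with losses of magnitude $L=K^2$ yielding $\minmaxbudget=O((K^4/\eps^2)(\numbatch+\ln(1/\delta)))$, and projected gradient with \approxproject{} yielding $O(K^8/\eps^4)$ calls), arriving at the correct bounds. The only small correction: for \SparringFPL the failure probability is not solely the event \eqref{eq:f2} --- the \FPL regret bound itself holds only with high probability over the perturbations (via a martingale/Azuma argument), so $\delta$ must be split between the statistical event and the algorithmic one, exactly as the paper does in Theorem~\ref{thm:fpl}.
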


While \SparringFPL is very simple and intuitive, \ProjectedGD achieves a better number of oracle calls whenever
    $K \ll \sqrt{\ln(\picard/\delta)}$.

Our algorithms can be used in the full-explore-exploit setting as well: after $\numbatch$ exploration rounds with uniform exploration strategy, $\wpolhat$ is computed and used in the remaining rounds for both actions. The parameter $\numbatch$ is chosen in advance as a function of the time horizon $T$. The statistical performance is expressed via regret, as defined in \eqref{eq:rd1}. The total running time is dominated by the time to compute $\wpolhat$.%
\footnote{The running time to \emph{execute} $\wpolhat$ in each of the exploitation rounds (i.e., to compute the random action for a given context) is a low-order term; we omit further details from this version.}

\begin{theorem}[regret]\label{thm:main-regret}
Consider $K$ actions, time horizon $T\geq K$, and a policy class $\Pi$ with $\PLength$-bit representation. Fix a parameter $\delta>0$. Both \SparringFPL and \ProjectedGD achieve regret
    $O(K^{2/3}\, T^{2/3}\, \Psi^{1/3})$     
with probability $1-\delta$, where
    $\Psi = \ln(\picard/\delta)$.
The number of oracle calls is                         
    $\NCalls = O(K^{10/3}\, T^{4/3}\,\Psi^{-1/3})$   
for \SparringFPL and
    $\NCalls = O(K^{16/3}\, T^{4/3}\,\Psi^{-4/3})$   
for \ProjectedGD.
For both algorithms, disregarding oracle calls,
the total running time is $O(\numbatch K\NCalls)$,
the storage requirement is $O(\PLength\NCalls)$, and
the number of exploration rounds is
    $\numbatch = O(K^{2/3}\, T^{2/3}\,\Psi^{1/3})$.
\end{theorem}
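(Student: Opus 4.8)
The plan is to reduce everything to the explore-then-exploit guarantee of Theorem~\ref{thm:main-eps} and then optimally trade off exploration against exploitation. I would run either algorithm in exactly the mode described just before the theorem: spend the first $\numbatch$ rounds on uniform exploration, compute $\wpolhat$ from the collected data, and in each remaining round play $a_t,b_t$ as two independent draws from $\wpolhat$. Since the two algorithms are deployed identically and differ only in how $\wpolhat$ is computed, the regret bound is common to both, and the per-algorithm oracle-call counts will simply be inherited from Theorem~\ref{thm:main-eps}. The only free parameter is $\numbatch$, chosen at the end; Theorem~\ref{thm:main-eps} (equivalently, the bound on $\empmatbnd$ in \eqref{eq:f2} together with Lemma~\ref{lem:explore-first}) says that with $\numbatch$ exploration rounds we obtain, with probability $1-\delta$, an $\eps$-approximate von Neumann winner for $\Msup$ with $\eps=O(K\sqrt{\Psi/\numbatch})$, where $\Psi=\ln(\picard/\delta)$.

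I would then split the regret \eqref{eq:rd1} using $\max_\pi(\cdot)\le \max_\pi(\text{exploration sum})+\max_\pi(\text{exploitation sum})$. The exploration bound is trivial: for any fixed $\pi$ each round contributes at most $\tfrac12(1+1)=1$, since all entries of $\preft$ lie in $[-1,+1]$, so the exploration contribution is at most $\numbatch$ deterministically. The exploitation phase is the crux. Conditioning on the event that $\wpolhat$ is an $\eps$-approximate von Neumann winner, I would show that for every fixed comparator $\pi$ the conditional expectation of one exploitation round's contribution is at most $\eps$: drawing $(x_t,\preft)\sim\Dpair$ and $a_t,b_t$ independently from $\wpolhat$, this expectation equals $\Exp{\altpi\sim\wpolhat}{\Msupelt{\pi}{\altpi}}$ by \eqref{eq:f6}, and the maxmin (von Neumann) property of $\wpolhat$ together with the skew-symmetry of $\Msup$ bounds it by $\eps$ (take the pure strategy on $\pi$ in the defining inequality $\min_{\uupol}\trans{\wpolhat}\Msup\uupol\ge-\eps$ and use $\trans{\Msup}=-\Msup$). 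The per-round contributions then form a bounded martingale-difference sequence (increments in $[-1,1]$, conditional mean $\le\eps$), so Azuma--Hoeffding plus a union bound over the $\picard$ comparators gives, with probability $1-\delta$, an exploitation regret of at most $(T-\numbatch)\eps+O(\sqrt{T\Psi})$.

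Combining, the total regret is at most $\numbatch+T\cdot O(K\sqrt{\Psi/\numbatch})+O(\sqrt{T\Psi})$. Minimizing the first two terms over $\numbatch$ (they balance when $\numbatch^{3/2}\asymp TK\sqrt{\Psi}$) yields $\numbatch=O(K^{2/3}T^{2/3}\Psi^{1/3})$ and a regret of the same order $O(K^{2/3}T^{2/3}\Psi^{1/3})$; the additive $O(\sqrt{T\Psi})$ is lower-order in this regime ($T\ge K$) and is absorbed. At this choice the corresponding accuracy is $\eps=O(K^{2/3}\Psi^{1/3}T^{-1/3})$, and substituting this $\eps$ and $\numbatch$ into the oracle-call, running-time, and storage expressions of Theorem~\ref{thm:main-eps} produces exactly the claimed $\NCalls=O(K^{10/3}T^{4/3}\Psi^{-1/3})$ for \SparringFPL and $\NCalls=O(K^{16/3}T^{4/3}\Psi^{-4/3})$ for \ProjectedGD, together with running time $O(\numbatch K\NCalls)$ and storage $O(\PLength\NCalls)$.

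I expect the main obstacle to be the exploitation-phase bound, specifically upgrading the in-expectation per-round guarantee (which is all that ``$\eps$-approximate von Neumann winner'' directly supplies) to a high-probability bound on the \emph{realized} regret holding uniformly over all $\pi\in\Pi$. This requires (i) correctly identifying, via \eqref{eq:f6} and skew-symmetry, that the conditional per-round regret against a fixed $\pi$ is governed by $\trans{\wpolhat}\Msup$ and hence bounded by $\eps$; (ii) a martingale concentration argument with a union bound over the $\picard$ policies, which is exactly what reintroduces the $\Psi=\ln(\picard/\delta)$ factor into the comparator; and (iii) careful bookkeeping of the two failure events --- the approximation guarantee of Theorem~\ref{thm:main-eps} and the concentration event --- splitting the confidence $\delta$ between them so that the final statement holds with probability $1-\delta$.
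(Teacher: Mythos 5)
Your proposal is correct and follows essentially the same route the paper takes: the paper derives Theorem~\ref{thm:main-regret} directly from the explore-then-exploit guarantee by charging at most $1$ per exploration round and at most $\eps$ per exploitation round, then balancing $\numbatch$ against $T\cdot O(K\sqrt{\Psi/\numbatch})$, exactly as you do, and your substitution of the resulting $\eps$ into the oracle-call counts matches the stated bounds. The martingale/Azuma step you flag for upgrading the in-expectation exploitation bound to a high-probability realized-regret bound is glossed over in the paper but is handled correctly in your sketch and contributes only a lower-order term.
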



\section{Solving the compact game with \SparringFPL}
\label{sec:FPL}

Our first algorithm to solve \eqref{eq:f4}, \SparringFPL, is based on the Follow-the-Perturbed-Leader (\FPL)
algorithm of \citet{kalai03:fpl}. \FPL is designed for a standard
online learning problem:
Let $\decsp$ and $\losssp$ be subsets of $\reals^{\numbatch K}$.
On each round $t=1,\ldots,\minmaxbudget$,
the learner chooses a decision vector $\dd_t\in\decsp$, and
then receives a loss vector $\loss_t\in\losssp$.
The learner's goal is to minimize its cumulative loss
$\sum_{t=1}^\minmaxbudget \dd_t\cdot\loss_t$ relative to the best
possible loss using a fixed decision, that is,
$\min_{\dd\in\decsp} \sum_{t=1}^\minmaxbudget \dd\cdot\loss_t$.
\FPL chooses $\dd_t$ as the best such vector based on a slightly
perturbed version of the preceding losses. Namely,
letting $\perturb_t\in\reals^{\numbatch K}$ be chosen uniformly at random
from $[0,1/\fplparm]^{\numbatch K}$,
\[ \dd_t = \argmin_{\dd\in\decsp} \dd\cdot
\textstyle            \paren{\sum_{\tau=1}^{t-1} \loss_{\tau} + \perturb_t}.
\]


We solve \eqref{eq:f4} by sparring two copies of
\FPL, called \rowfpl\ and \colfpl, in the fashion of a repeated game.
On every round $t$, \rowfpl\ uses \FPL to select a vector $\ww_t$,
while \colfpl\ uses a different copy of \FPL to select a vector
$\uu_t$.
We then define the resulting loss vectors to be $-\bldiag \uu_t$ for
\rowfpl, and $\trans{\bldiag}\ww_t$ for \colfpl.
Here is the complete algorithm:
\begin{itemize}  \compactitemize
\item
For $t=1,\ldots,\minmaxbudget$:
\begin{itemize}  \compactitemize
\item
Choose uniform random perturbations $\perturb_t$, $\perturbalt_t$ from
$[0,1/\fplparm]^{\numbatch K}$.
\item
Let $\ww_t = \argmin_{\ww\in\C}
           {\ww} \cdot \brackets{
                   -\bldiag (\uu_1 + \cdots + \uu_{t-1}) + \perturb_t
                                }$.

\item
Let $\uu_t = \argmin_{\uu\in\C}
         {\uu} \cdot \brackets{ \trans{\bldiag}(\ww_1 + \cdots + \ww_{t-1}) +
           \perturbalt_t }$.
\end{itemize}
\item
Output
$\wwbar = \frac{1}{\minmaxbudget}
          \sum_{t=1}^\minmaxbudget \ww_t$
\end{itemize}
The $\argmin$ expressions in the algorithm are implemented using the classification oracle.
The returned vector $\wwbar$ is in $\C$, and in fact corresponds
to a uniform mixture of $\minmaxbudget$ policies.

In Appendix~\ref{sec:pf-FPL}, we show that to find an \eps-approximate solution to \eqref{eq:f4} with probability $1-\delta$, it suffices to
use
    $\minmaxbudget = O(K^4/\eps^2) (\numbatch+\ln (1/\delta))$
steps of the algorithm
with
    $\fplparm=\sqrt{2/(K^4 \minmaxbudget)}$,
which in turn implies Theorems~\ref{thm:main-eps} and \ref{thm:main-regret} for \SparringFPL.

%
%

\section{Solving the compact game with \ProjectedGD}
\label{sec:PGD}

Our second algorithm, called \ProjectedGD, solves \eqref{eq:f4} using online projected gradient descent methods as studied by~\cite{Zinkevich03:online}. The algorithm maintains a vector $\ww_t\in\C$ corresponding to a
strategy for the row player.
On every round, a column strategy $\uu_t\in\C$ is chosen that is a ``best
response'' to $\ww_t$.
The strategy $\ww_t$ is updated by taking a small gradient step.
The resulting vector $\zz_{t+1}$ is likely to be outside the set
$\C$, and therefore is (approximately) projected back to $\C$,
yielding $\ww_{t+1}$.
The algorithm is as follows:
\begin{itemize}   \compactitemize
\item
Choose any $\ww_1\in\C$
\item
For $t=1,\ldots,\Tout$:

\begin{itemize}   \compactitemize
\item
$\uu_t = \arg\min_{\uu\in\C} \trans{\ww_t} \bldiag \uu$
\item
$\zz_{t+1} = \ww_t + \eta \bldiag \uu_t$
\item
$\ww_{t+1} = \approxproject(\zz_{t+1}, \ww_t)$
\end{itemize}

\item
Output $\wwbar=\frac{1}{\Tout} \sum_{t=1}^{\Tout} \ww_t$
\end{itemize}


Ideally, we would like for $\ww_{t+1}$ to be the exact Euclidean
projection of $\zz_{t+1}$ onto $\C$, but instead need to settle for an approximation.
For this purpose,
the procedure $\approxproject(\zz,\vv_1)$, described below,
computes an approximate
projection of an arbitrary vector $\zz$ onto $\C$.
It takes as an input a second vector $\vv_1$ that is already in $\C$, and which
we can think of as an initial guess at the actual projection.
The quality (as an approximation) of the returned vector $\vbar$ is
allowed to depend on how close $\vv_1$ is to $\zz$.
Specifically, we require that, for all $\ss\in\C$, and a constant $\approxfactor$ specified later,
\begin{equation} \label{eq:b1}
  \lensq{\ss - \vbar} \leq  \lensq{\ss - \zz}
            + \approxfactor \cdot \len{\vv_1 - \zz}.
\end{equation}

In Appendix~\ref{sec:pf-GD}, we show that with the parameter
    $\eta = {2}/({\eprefbnd\sqrt{\Tout}})$
our algorithm finds an \eps-approximate solution to \eqref{eq:f4}, where
    $\algerr = {2\eprefbnd}/{\sqrt{\Tout}}
               + {\eprefbnd \approxfactor}/{2}$.

%
%
%

\xhdr{Computing approximate projections.}
It remains to describe the approximate-projection procedure
$\approxproject(\zz,\vv_1)$.
Given an arbitrary vector $\zz$ and another
vector $\vv_1\in\C$,
the goal of the algorithm, as in \eqref{eq:b1}, can be restated as
that of finding a vector $\vbar\in\C$ for which
\begin{equation} \label{eq:g3}
  \min_{\ss\in\C} F(\ss,\vbar) \geq - \approxfactor \cdot \len{\vv_1 - \zz}
\end{equation}
where we define
$F(\ss,\vv)
 = \lensq{\ss - \zz} - \lensq{\ss - \vv}
 = 2\ss\cdot (\vv - \zz) + \lensq{\zz} - \lensq{\vv}$.
Note that $F$ is linear in $\ss$ (for each $\vv$), and concave in
$\vv$ (for each $\ss$).
To ensure that \eqref{eq:g3} holds, we give an algorithm that aims to
maximize the left-hand side of this inequality.
(As a side note, the maximizing vector turns out to be exactly the
projection of $\zz$ onto $\C$, although we do not require that fact
for our algorithm and analysis.)

To this end, we use an algorithm that resembles repeated play of
a game in which the payoff is defined by $F$.
The $\ss$ player uses best response on each round, while the $\vv$
player again uses a variant of online gradient ascent
applied to the function $F(\ss_t,\cdot)$.
The algorithm takes a parameter $\nu\in (0,1]$, and uses
$\vv_1\in\C$, which was provided as an argument
to $\approxproject(\zz,\vv_1)$, as the initial vector.
Here is the algorithm:

\begin{itemize}  \compactitemize
\item
For $t=1,\ldots,\Tin$:

\begin{itemize}  \compactitemize
\item
$\ss_t = \arg\min_{\ss\in\C} \ss\cdot (\vv_t - \zz)$
\item
$\vv_{t+1} = (1-\nu) \vv_t + \nu \ss_t$
\end{itemize}

\item
Output $\vbar=\frac{1}{\Tin} \sum_t \vv_t$
\end{itemize}
Note that $\vv_t$ is in $\C$ for every $t$ (by convexity of $\C$), and
therefore $\vbar$ is as well.

In Appendix~\ref{sec:pf-GD-inner}, we show that $\approxproject(\zz,\vv_1)$ with parameter
    $ \nu = {\len{\zz-\vv_1}}/{\sqrt{\Tin}}$
computes $\vbar$ that satisfies \eqref{eq:g3} with
    $\approxfactor = 8/\sqrt{\Tin}$.
We optimize the choice of $\Tin$ and $\Tout$ to show that one can obtain an $\algerr$-approximate solution to \eqref{eq:f4} using only
    $O(K^8 / \algerr^4)$
oracle calls. This in turn implies Theorems~\ref{thm:main-eps} and \ref{thm:main-regret} for \ProjectedGD.

\acks{We would like to thank Alekh Agarwal for insightful comments and discussions.}

\bibliography{bibliography,cdb}

\appendix


\section{Failure of the Condorcet winner to exist}
\label{sec:non-condorcet}

Here, we investigate the reliability of the Condorcet assumption by replicating the experiment of \citep[Section 6.1]{RCS2014} with a small modification. As in their setting, we consider a family of $K$-armed dueling bandit problems arising from the ranker evaluation problem in IR, where the comparisons are carried out using Probabilistic Interleave~\citep{hofmann11:probabilistic} and the preferences are generated using click models simulating user behavior~\citep{guo09:tailoring,guo09:efficient}. The rankers are sampled randomly from the set of $136$ rankers provided with the MSLR dataset.\footnote{\url{http://research.microsoft.com/en-us/projects/mslr/default.aspx}} However, unlike the experiments of \cite{RCS2014}, we use an informational click model, rather than a perfect one~\citep{hofmann13:fidelity}. The former simulates the behavior of a user who is seeking general information about a broad topic, while the latter represents an idealized user, who meticulously examines every document in the retrieved list. We believe that the informational click model is more realistic and therefore use it here.

The plot in Figure \ref{fig:prob-condorcet} shows the probability with which the encountered dueling bandit problems contain Condorcet winners. As this figure demonstrates, in this setting, the occurrence of the Condorcet winner drops rapidly as the number of rankers grows.

This shows that even in this simple non-contextual example the assumption that there exists a Condorcet winner is too unreliable to be practical. Needless to say that in the contextual dueling bandit problem, where one is dealing with a potentially very large and diverse set of policies, the likelihood of one policy dominating every single other policy is even more unrealistic.

\begin{figure}[!t]

\centering
\includegraphics[width=.685\textwidth]{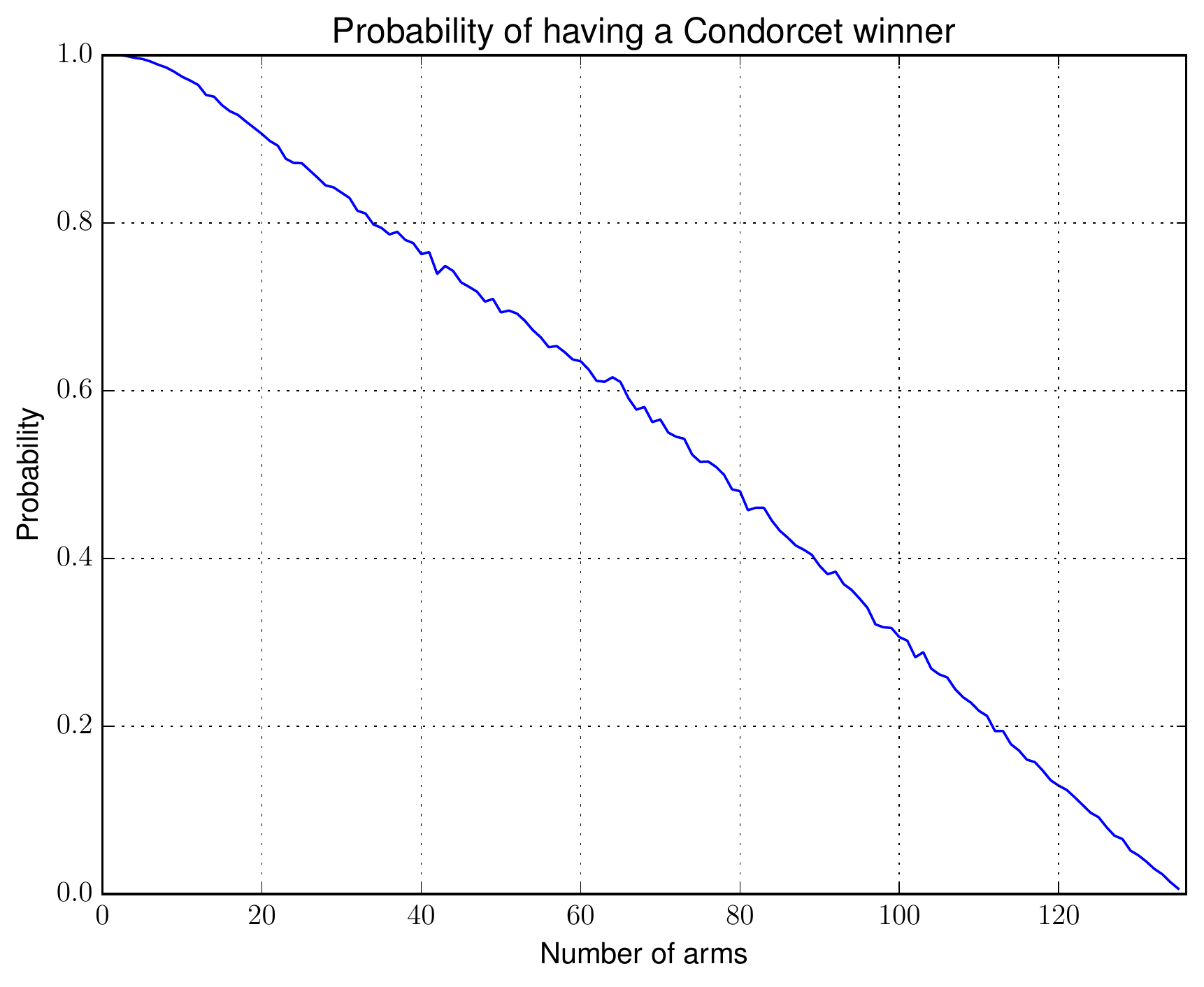}
\caption{The probability that the Condorcet assumption holds for subsets of the feature rankers in the MSLR dataset. The probability is shown as a function of the size of the subset of rankers under consideration.}
\label{fig:prob-condorcet}
\end{figure}

\section{Comparison between the Copeland and von Neumann winners}
\label{sec:copeland}

A Copeland winner is defined to be any arm that beats the largest number of other arms. It is a generalization of the Condorcet winner in the sense that if the Condorcet winner exists, it will be a Copeland winner. However, we claim that the von Neumann winner is a more natural generalization than the Copeland winner for the following two reasons: first, in the absence of a Condorcet winner, Copeland winners, both individually and as a collective, can lose to an arm that is not a Copeland winner, whereas the von Neumann winner beats or ties with every single arm; second, the set of Copeland winners can be altered by the introduction of ``clones,'' i.e., arms whose corresponding rows of the preference matrix are identical to each other.

To demonstrate this lack of stability of Copeland winners, consider any $K+3$-armed example with $K>4$, where arms $a_1, a_2$ and $a_3$ beat all other arms and the three of them are in a cycle, with $a_1$ beating $a_2$, $a_2$ beating $a_3$ and $a_3$ beating $a_1$ all with probability $1$. It is easy to see that these three arms are the only Copeland winners with Copeland score equal to $K+1$ and also form the support of the von Neumann distribution: indeed, the von Neumann distribution is simply the uniform distribution on these three arms. Now, let us consider a slight modification of this problem, where we add one more arm, called $a_0$, which is a duplicate of arm $a_1$; hence $P(0,1) = 0$ and $P(0,j) = P(1,j)$ for all $j>1$. In the following we explain what happens to the set of Copeland winners after this modification. In the presence of ties, there are three sensible definitions that one could use for the Copeland score; these definitions and the corresponding scores for the top four arms in our modified example can be found in Table \ref{tbl:cpld}. As the quantities in this table show, regardless of the definition of the Copeland score used, the set of Copeland winners for our new $K+4$-armed dueling bandit problem does not contain all of $a_0,\ldots,a_3$. Indeed, under no definition can arm $a_2$ be considered a Copeland winner.

On the other hand, arms $a_0, a_1, a_2, a_3$ still form the support of the von Neumann distribution of this modified dueling bandit problem: if we assign weights $w(0),w(1),w(2),w(3)$ to these four arms such that
\[ w(0)+w(1) = w(2) = w(3) = \frac{1}{3} \]
and sample an arm $a_i$ according to these weights, we will (on average) beat any $a_j$ with $j>3$ and tie with all $a_j$ with $j\le 3$.

We consider the lack of stability under cloning illustrated by this example to be a major drawback of the Copeland score as a measure of quality.

\begin{table}[!t]
\centering
    \caption{Copeland scores of the top arms in the duplicated example}
    \label{tbl:cpld}

    \vspace{3mm}
	\begin{tabular}{l|c|c|c}
	The Copeland score & $|\{j:\: P(i,j) > 0\}|$ & $|\{j:\: P(i,j) \ge 0\}|$ & $|\{j:\: P(i,j) > 0\}|$ \hphantom{---}
\\
    variant (for $a_i$) & & & \hphantom{---} ${} - |\{j:\: P(i,j) < 0\}|$
 \\[5pt]
	\hline
	$i=0,1$ & $K+1$ & $K+3$ & $K$ \\[5pt]
	$i=2$ 	   & $K+1$ & $K+2$ & $K-1$ \\[5pt]
	$i=3$ 	   & $K+2$ & $K+3$ & $K+1$ \\[5pt]
	\end{tabular}
\end{table}

Furthermore, as the following $5$-armed preference matrix illustrates, the von Neumann winner does not necessarily contain the Copeland winner in its support:

\begin{align*}
\pref = \begin{bmatrix}
   \;0  & \hphantom{-}0.5\hphantom{0}  & -0.5\hphantom{0}  & \hphantom{-}0.5\hphantom{0}  & -0.95 \\
 -0.5\hphantom{0}  &   \;0  & \hphantom{-}0.5\hphantom{0}  & -0.2\hphantom{0}  & \hphantom{-}0.5\hphantom{0} \\
 \hphantom{-}0.5\hphantom{0}  & -0.5\hphantom{0}  &   \;0  & -0.2\hphantom{0}  & \hphantom{-}0.5\hphantom{0} \\
 -0.5\hphantom{0}  & \hphantom{-}0.2\hphantom{0}  & \hphantom{-}0.2\hphantom{0}  &  \;0   & \hphantom{-}0.5\hphantom{0} \\
 \hphantom{-}0.95  & -0.5\hphantom{0}  & -0.5\hphantom{0}  & -0.5\hphantom{0}  &  \;0
\end{bmatrix}
\end{align*}

\begin{figure}[!t]

\centering
\includegraphics[width=.685\textwidth]{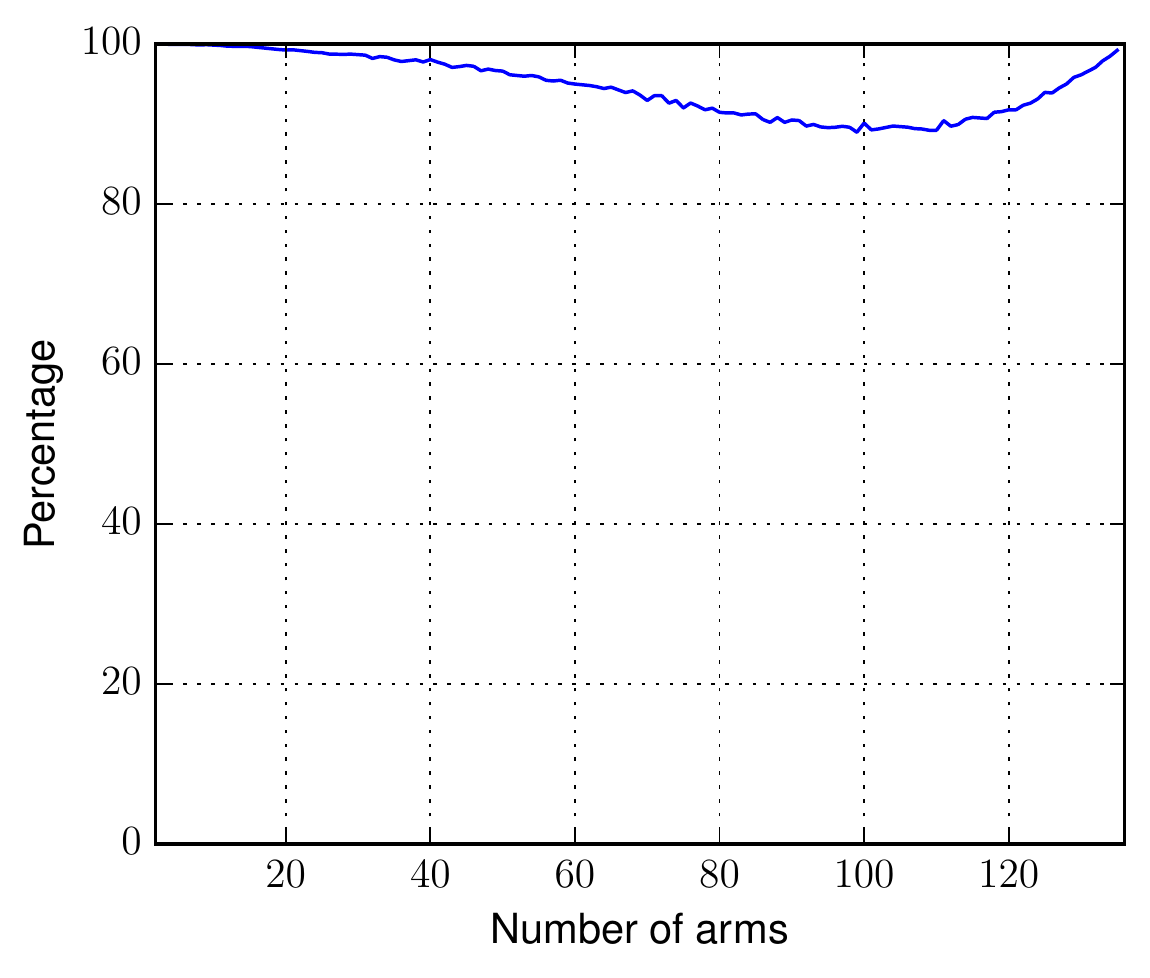}
\caption{The percentage of preference matrices of a given size, sampled from the MSLR dataset, for which all Copeland winners are contained in the support of the von Neumann winner}
\label{fig:copeland-vonNeumann}
\end{figure}

Indeed, the von Neumann winner of this matrix is the uniform distribution on the first three arms, as can be easily checked by multiplying the row vector $[1/3~1/3~1/3~0~0]$ with $\pref$, while the Copeland winner is the fourth arm, since the Copeland scores of the $5$ arms are $[2~2~2~3~1]$. Moreover, the fourth arm also happens to be both the Borda winner \citep{Urvoy:2013} and the Random Walk winner \citep{Negahban:2012,busa2014survey}. The Borda winner is the arm with the highest chance of winning a comparison against a uniformly randomly chosen opponent, i.e., the arm corresponding to the row in the preference matrix whose entries have the highest sum: in this case the Borda scores are $[0.455~~0.53~~0.53~~0.54~~0.445]$. The Random Walk winner is obtained as follows: first, we convert the ``probabilistic'' preference matrix (i.e., $\pref/2 + 0.5$) into a column-wise stochastic matrix (by dividing each column by its sum), then find the stationary distribution of the Markov chain defined by this matrix (by finding the right eigenvector of the stochastic matrix corresponding to eigenvalue $1$), and, finally, declare the arm with the highest probability under this stationary distribution to be the Random Walk winner. In this particular example, the stationary distribution is $[0.198~~0.212~~0.204~~0.217~~0.169]$ and so the Random Walk winner is the fourth arm, as mentioned before.

\begin{figure}[!t]

\centering
\includegraphics[width=.95\textwidth]{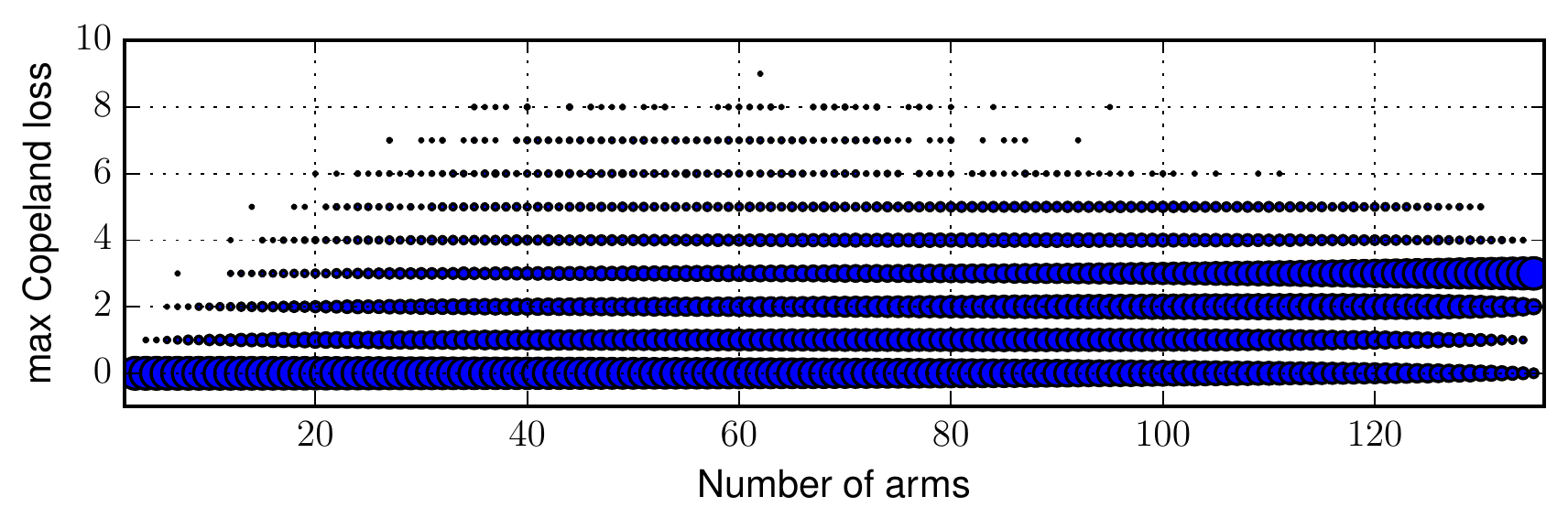}
\caption{The fractions of preference matrices of a given size (horizontal axis) with a given maximum Copeland loss among the arms in the support of their von Neumann winner (vertical axis): given $K$ and $M$, the area of the circle at coordinate $(K,M)$ of this plot is proportinal to the percentage of $K$-armed sub-matrices of the Informational MSLR preference matrix, $\pref$, for which the maximum Copeland loss of an arm in the support of the von Neumann winner of $\pref$ is equal to $M$.}
\label{fig:copeland-losses}
\end{figure}

Despite the above observations, in practice, the Copeland winner and the von Neumann winner tend to agree to a large extent. For instance, in preference matrices sampled from the MSLR dataset, as described in Appendix \ref{sec:non-condorcet}, in over 99.9\%\ of the examples, the von Neumann winner contained at least one Copeland winner. Moreover, in the overwhelming majority of the cases, all Copeland winners were assigned non-zero probability by the von Neumann winner, although the percentage of cases where this phenomenon occurs is slightly lower than the above figure and dependent on the number of arms (see Figure \ref{fig:copeland-vonNeumann}).
%
%
Based on these observations, the Copeland winner can roughly be thought of as a more restrictive notion than the von Neumann winner.
%

Furthermore, as Figure \ref{fig:copeland-losses} demonstrates, the arms in the support of the von Neumann winner tend to have high Copeland scores (or equivalently, low Copeland losses) in practice. Given this close relation between these two notions of winners, a natural question becomes whether the recent improvements made in solving the Copeland dueling bandit problem \citep{Copeland2015} can be used to speed up the task of finding the von Neumann winner.

\begin{figure}[!t]

\centering
\includegraphics[width=.685\textwidth]{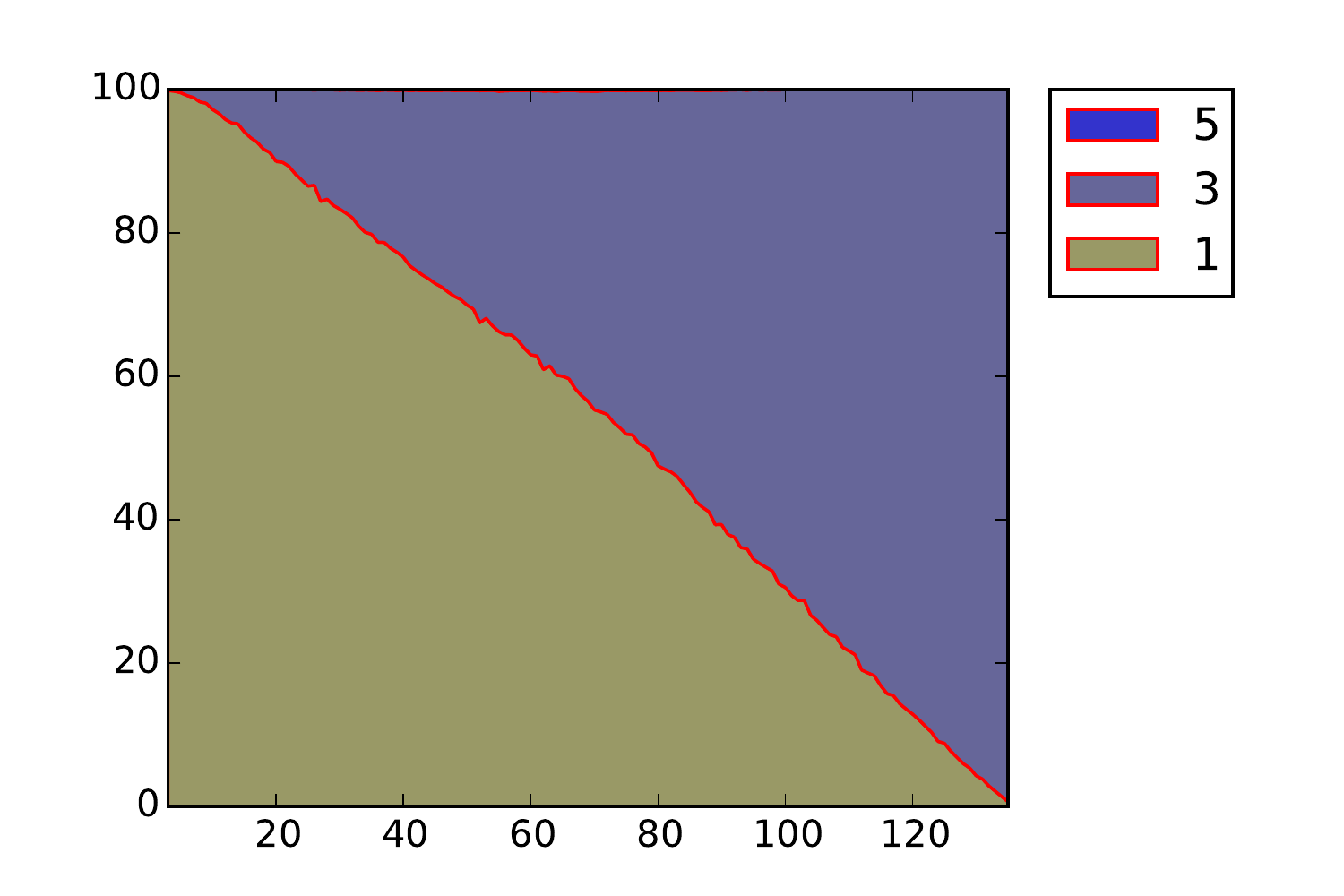}
\includegraphics[width=.685\textwidth]{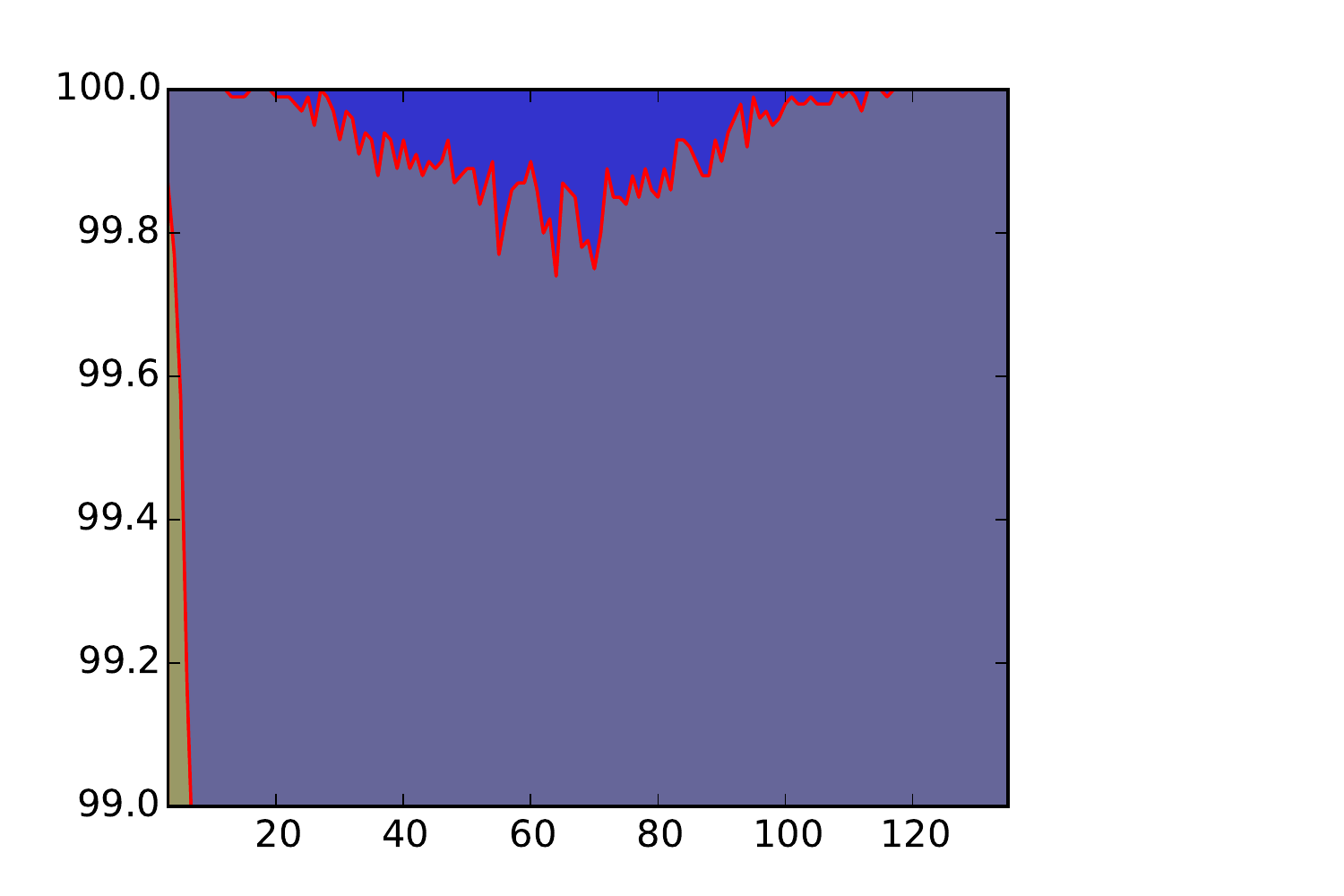}
\caption{The percentages of preference matrices of a given size with von Neumann winners of a given support size: the bottom plot is simply a zoomed version of the top one, and it was included, because preference matrices with von Neumann winners of support size 5 were very infrequent.}
\label{fig:vonNeumannSupp}
\end{figure}

Another aspect of the von Neumann winner that might be disconcerting when first encountered is the fact that it is a distribution, which in theory can put non-zero probability on all arms; however, in practice, this is very far from being the case. Indeed, among the over a million preference matrices sampled from the MSLR dataset, not a single one had a von Neumann winner that assigned non-zero probability to more than 5 arms. In fact, in the vast majority of the cases, the von Neumann winners had supports of size 1 or 3 (see Figure \ref{fig:vonNeumannSupp}). Note that fewer than 0.03\%\ of preference matrices had von Neumann winners whose support contains 5 arms.


\section{Analysis of \SparringEXP (proof of Theorem \ref{thm:sparring-exp4})}
\label{sec:pf-SparringEXP}

To fulfill the requirements of the learning model for \expfp, we also need to define rewards $r_t(a)$ for all of
the actions that were {\em not\/} chosen.
Furthermore, these rewards need to be defined {\em before\/}
each copy of the algorithm chooses its action (or, more technically, in a
manner that is conditionally independent of each copy's choice).
To this end, for every pair of actions $(a,b)$, we define
a $\{-1,+1\}$-valued random variable $R_t(a,b)$ with the
expected value $\preftab$.
Thus, $R_t(a,b)$ can be viewed as the outcome of a hypothetical duel between actions $a$ and $b$.
These values are only used for the mathematical argument, and do not
literally have to be computed.
Only the pair $(a_t,b_t)$ is actually used in a duel.

For \exprow, based on \expcol's chosen action $b_t$, we then define rewards
$r^{{\it (row)}}_t(a)=R_t(a,b_t)$ for all $a$.
And similarly,
for \expcol, based on \exprow's chosen action $a_t$, we define rewards
$r^{{\it (col)}}_t(b)= -R_t(a_t,b)$ for all $b$.
In particular, this means that \exprow\ receives, for its chosen action $a_t$, the reward
$R_t(a_t,b_t)$ (that is, the result of a duel between $a_t$ and $b_t$), while \expcol\ receives
the reward $-R_t(a_t,b_t)$ for its chosen action $b_t$.

Let us first take the point of view of \exprow.
Plugging in to \eqref{eq:exp4-g1}, we have that, with probability at least
$1-\delta/4$, for all $\pi\in\Pi$,
\[
  \sum_{t=1}^T R_t(a_t,b_t) \geq
     \sum_{t=1}^T R_t(\pi(x_t),b_t) - O(\sqrt{KT \ln(\picard/\delta)}).
\]
Further, using Azuma's lemma and union bound, we can show that, with
probability at least $1-\delta/4$, for every $\pi\in\Pi$
\[
\sum_{t=1}^T R_t(\pi(x_t),b_t) \geq
  \sum_{t=1}^T \prefeltt{\pi(x_t)}{b_t} - O(\sqrt{{K}T \ln(\picard/\delta)}).
\]
Similarly, from \expcol's perspective, with probability at least
$1-\delta/4$, for all
$\pi\in\Pi$,
\[
  -\sum_{t=1}^T R_t(a_t,b_t) \geq
     -\sum_{t=1}^T R_t(a_t,\pi(x_t)) - O(\sqrt{KT \ln(\picard/\delta)})
\]
and, by Azuma's lemma and the skew-symmetry of $\preft$, with probability at least
$1-\delta/4$, for every
$\pi\in\Pi$,
\[
-\sum_{t=1}^T R_t(a_t,\pi(x_t))
\geq
     \sum_{t=1}^T \prefeltt{{\pi(x_t)}}{a_t} - O(\sqrt{{K}T \ln(\picard/\delta)}).
\]
Combining and rearranging now yields the theorem.

\section{Analysis of \SparringFPL and \ProjectedGD}
\label{appendix-proof-perceptron}

Compared to the presentation in the body of the paper, we extend our setting from uniform exploration strategy to an arbitrary unbiased estimator $\eprefi$ of $\prefi$, i.e., to any matrix $\eprefi$ which satisfies
    $\expect{}{\eprefi \given \prefi} = \prefi$.
Our results are parameterized by two numbers, $\eprefbnd, \eprefvar$, such that
    $|\eprefiab|\leq \eprefbnd$ and $\mathtt{Var}(\eprefiab) \leq \eprefvar$
for all exploration rounds $i$ and all action pairs $(a,b)$. For uniform exploration, both upper bounds are $K^2$.

\subsection{Reduction from $\Msup$ to $\Memp$ (proof of Lemma \ref{lem:explore-first})}
\label{sec:pf-ExploreFirst}

In this subsection, we prove Lemma \ref{lem:explore-first} which reduces the optimization problem to that on the approximate matrix $\Memp$ computed from the data. As a first step, we prove \eqref{eq:f2} which relates $\Memp$ to the true preference matrix for $\Msup$.

\begin{proof}[\eqref{eq:f2}]
Let $Z_1,\ldots,Z_\numbatch$ be independent, identically distributed random
variables, each taking values in $[-R,R]$, and each having mean zero and
variance $V$.
Then according to Bernstein's inequality,
the probability that the average $A=(1/\numbatch)\sum_{i=1}^\numbatch Z_i$ exceeds
some value $s$ is at most
\[
\exp\paren{-\frac{s^2/2}
                 {\numbatch V + R s / 3}}.
\]
For an appropriate choice of $s$, this implies that, with
probability at least $1-\delta$,
\begin{equation} \label{eq:bern1}
 A \leq \frac{2 R \ln(1/\delta)}{3\numbatch} + \sqrt{\frac{2V\ln(1/\delta)}{\numbatch}}.
\end{equation}

To derive \eqref{eq:f2}, for a fixed pair of policies $\pi$ and
$\altpi$, we can let
$Z_i=\eprefielt{\pi(x_i)}{\altpi(x_i)} - \Msuppipi$
whose mean is zero and variance is at most $\eprefvar$;
further, $|Z_i|\leq 1+\eprefbnd$.
Plugging into \eqref{eq:bern1} implies that \eqref{eq:f2} holds with
\[
\empmatbnd=
 \frac{2 (1+\eprefbnd) \ln(\picard^2/\delta)}{3\numbatch}
        + \sqrt{\frac{2\eprefvar\ln(\picard^2/\delta)}{\numbatch}}
\]
with probability at least $1-\delta/\picard^2$.
By the union bound, with probability at least $1-\delta$, this will
hold simultaneously for {\em all\/} policies $\pi$ and $\altpi$.
\end{proof}

\begin{proof}[Lemma \ref{lem:explore-first}]
\eqref{eq:f2} implies that
$\trans{\wwpol} \Memp \uupol$ is within $\empmatbnd$ of
$\trans{\wwpol} \Msup \uupol$, for all probability vectors $\wwpol$ and $\uupol$.
Therefore,
\begin{eqnarray*}
\min_{\uupol\in\prsim{\picard}} \trans{\wpolhat} \Msup \uupol
&\geq&
\min_{\uupol\in\prsim{\picard}} \trans{\wpolhat} \Memp \uupol - \empmatbnd
\\
&\geq&
\max_{\wwpol\in\prsim{\picard}} \min_{\uupol\in\prsim{\picard}} \trans{\wwpol} \Memp \uupol - \empmatbnd - \algerr
\\
&\geq&
\max_{\wwpol\in\prsim{\picard}} \min_{\uupol\in\prsim{\picard}} \trans{\wwpol} \Msup \uupol - 2\empmatbnd - \algerr
\\
&=&
  - (2\empmatbnd + \algerr).
\end{eqnarray*}
\end{proof}

\subsection{Analysis of \SparringFPL}
\label{sec:pf-FPL}

To analyze \SparringFPL, we build on the provable guarantees for \FPL.

For convenience, let us recap the learning setting for \FPL.
Let $\decsp$ and $\losssp$ be subsets of $\reals^{\numbatch K}$.
On each round $t=1,\ldots,\minmaxbudget$,
the learner chooses a decision vector $\dd_t\in\decsp$, and
then receives a loss vector $\loss_t\in\losssp$.
The learner's goal is to minimize its cumulative loss
$\sum_{t=1}^\minmaxbudget \dd_t\cdot\loss_t$ relative to the best
possible loss using a fixed decision, that is,
$\min_{\dd\in\decsp} \sum_{t=1}^\minmaxbudget \dd\cdot\loss_t$.

\citet[Theorem~1.1]{kalai03:fpl} prove the following
(slightly simplified) result: assume that $D$, $R$ and $A$ are such
that for all $\dd\in\decsp$ and $\loss\in\losssp$ we have that
$\lone{\dd} \leq D$, $|\dd\cdot\loss| \leq R$ and
$\lone{\loss}\leq A$.
Also, let $\alpha=\sqrt{2D/(R A \minmaxbudget)}$.
Then, for any sequence $\loss_1,\ldots,\loss_\minmaxbudget\in\losssp$,
\begin{equation} \label{eqn:fpl1}
 \Exp{}{\sum_{t=1}^\minmaxbudget \dd_t \cdot \loss_t} \leq
        \min_{\dd\in\decsp} \sum_{t=1}^\minmaxbudget \dd \cdot \loss_t
        + 2\sqrt{2 D R A \minmaxbudget}.
\end{equation}
where the expectation is over the random choice of perturbations.
Kalai and Vempala prove this in the oblivious case when the adversary
has fixed the $\loss_t$'s ahead of time.
However, this restriction can be relaxed to allow each
$\loss_t$ to be selected adaptively in a possibly stochastic fashion
that may depend on the entire preceding history through round $t-1$,
but {\em not\/} on the perturbation $\perturb_t$ for the current round.
Using a martingale argument and Azuma's lemma
(see also \citealp[Lemma~4.1]{Cesa-Bianchi:2006}),
it can then be shown that, with probability at least $1-\delta'$,
\begin{equation} \label{eqn:fpl3}
  \frac{1}{\minmaxbudget}\sum_{t=1}^\minmaxbudget \dd_t \cdot \loss_t
   \leq
  \min_{\dd\in\decsp} \frac{1}{\minmaxbudget}\sum_{t=1}^\minmaxbudget \dd \cdot \loss_t
        + 2\sqrt{2 D R A / \minmaxbudget}
        + 2R \sqrt{2\ln(1/\delta')/\minmaxbudget }.
\end{equation}

\begin{theorem}
\label{thm:fpl}
In \SparringFPL, set parameter $\fplparm=\sqrt{2/(\eprefbnd^2 \minmaxbudget)}$.
Then with probability at least $1-\delta$, the vector $\wwbar$ returned by the algorithm satisfies
\[
  \min_{\uu\in \C} \trans{\wwbar} \bldiag \uu
  \geq
  \max_{\ww\in \C} \min_{\uu\in \C} \trans{\ww} \bldiag \uu - 2\regret
\]
where
$\regret=
2\eprefbnd \sqrt{{2 \numbatch}/{\minmaxbudget}}
+
2 \eprefbnd \sqrt{{2\ln(2/\delta)}/{\minmaxbudget}}$.
\end{theorem}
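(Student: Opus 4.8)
The plan is to apply the high-probability \FPL guarantee \eqref{eqn:fpl3} separately to each of the two sparring copies, \rowfpl\ and \colfpl, and then combine the two resulting no-regret statements through the standard argument that the time-averaged play of two no-regret players of a matrix game forms an approximate saddle point. The game here is played over the policy hull $\C$ with payoff $\trans{\ww}\bldiag\uu$, so the target inequality is exactly the statement that $\wwbar$ is an approximate maxmin strategy for $\bldiag$.

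First I would instantiate \eqref{eqn:fpl3} with decision set $\decsp=\C$, which requires supplying the three bounds $D$, $R$, $A$. Since every policy vector satisfies $\lone{\vpi}=\sqrt{\numbatch}$ and $\C$ is their convex hull, I take $D=\sqrt{\numbatch}$. For the payoff bound, recall (from \eqref{eq:f3}) that $\trans{\vpi}\bldiag\valtpi=\Memppipi$ is an average of the $\numbatch$ entries $\eprefiab$, each bounded by $\eprefbnd$, so by bilinearity $|\trans{\ww}\bldiag\uu|\le\eprefbnd$ for all $\ww,\uu\in\C$; hence the losses fed to each copy, $-\bldiag\uu_t$ for \rowfpl\ and $\trans{\bldiag}\ww_t$ for \colfpl, satisfy $|\dd\cdot\loss|\le\eprefbnd=:R$. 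For the last bound I argue block by block: the $i$-th block of $\bldiag\uu$ equals $\eprefi$ applied to the corresponding block of $\uu\in\C$, a nonnegative vector summing to $1/\sqrt{\numbatch}$, so bounding each column $\ell_1$-norm of $\eprefi$ by $\eprefbnd$ gives $\lone{\bldiag\uu}\le\eprefbnd\sqrt{\numbatch}=:A$. With these values $\sqrt{2D/(RA\minmaxbudget)}=\sqrt{2/(\eprefbnd^2\minmaxbudget)}$, exactly the prescribed $\fplparm$, and the leading regret term $2\sqrt{2DRA/\minmaxbudget}$ becomes $2\eprefbnd\sqrt{2\numbatch/\minmaxbudget}$, matching the first summand of $\regret$.

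Next I apply \eqref{eqn:fpl3} to each copy with confidence $\delta'=\delta/2$, so that the second summand $2R\sqrt{2\ln(1/\delta')/\minmaxbudget}$ becomes $2\eprefbnd\sqrt{2\ln(2/\delta)/\minmaxbudget}$; a union bound gives that both guarantees hold with probability at least $1-\delta$, each contributing a one-sided slack of $\regret$. Writing $\wwbar=\tfrac1\minmaxbudget\sum_t\ww_t$ and $\ubar=\tfrac1\minmaxbudget\sum_t\uu_t$ (both in $\C$ by convexity), the \rowfpl\ bound rearranges to $\tfrac1\minmaxbudget\sum_t\trans{\ww_t}\bldiag\uu_t\ge\max_{\ww\in\C}\trans{\ww}\bldiag\ubar-\regret$, while the \colfpl\ bound rearranges to $\min_{\uu\in\C}\trans{\wwbar}\bldiag\uu\ge\tfrac1\minmaxbudget\sum_t\trans{\ww_t}\bldiag\uu_t-\regret$. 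Chaining these two inequalities and invoking the minimax theorem on the convex-compact set $\C$, namely $\max_{\ww\in\C}\trans{\ww}\bldiag\ubar\ge\min_{\uu\in\C}\max_{\ww\in\C}\trans{\ww}\bldiag\uu=\max_{\ww\in\C}\min_{\uu\in\C}\trans{\ww}\bldiag\uu$, yields $\min_{\uu\in\C}\trans{\wwbar}\bldiag\uu\ge\max_{\ww\in\C}\min_{\uu\in\C}\trans{\ww}\bldiag\uu-2\regret$, which is the claim.

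I expect the main obstacle to be justifying the use of \eqref{eqn:fpl3} even though the losses are generated adaptively by the sparring opponent rather than fixed in advance. The key point is that the loss $-\bldiag\uu_t$ handed to \rowfpl\ on round $t$ depends only on \colfpl's vector $\uu_t$, which is computed from the history $\ww_1,\dots,\ww_{t-1}$ together with \colfpl's own independent perturbation $\perturbalt_t$, and is therefore conditionally independent of \rowfpl's current perturbation $\perturb_t$ (and symmetrically for \colfpl). This is precisely the condition under which the martingale/Azuma refinement behind \eqref{eqn:fpl3} stays valid, so both high-probability regret bounds apply. Everything else is the bookkeeping of the parameter bounds above and the routine saddle-point combination.
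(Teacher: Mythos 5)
Your proposal is correct and follows essentially the same route as the paper's proof: instantiate the high-probability \FPL bound \eqref{eqn:fpl3} for each sparring copy with $D=\sqrt{\numbatch}$, $R=\eprefbnd$, $A=\eprefbnd\sqrt{\numbatch}$ and $\delta'=\delta/2$, chain the two resulting inequalities through the average payoff $\tfrac1\minmaxbudget\sum_t\trans{\ww_t}\bldiag\uu_t$, and close with the minimax equality on $\C$. The only difference is that you spell out the justifications for $D$, $R$, $A$ and for the adaptive-adversary issue, which the paper asserts or handles in the surrounding discussion.
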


Thus, to find an $\algerr$-approximate solution, we can choose
$\minmaxbudget$ to be
$O(\eprefbnd^2/\algerr^2)(\numbatch+\ln(1/\delta))$.
This also gives a bound on the number of oracle calls (it is called twice per round).

\begin{proof}
Note that in our case, we can choose $D=\sqrt{\numbatch}$,
$A=\eprefbnd\sqrt{\numbatch}$ and
$R=\eprefbnd$.

Let
$\ubar = \frac{1}{\minmaxbudget} \sum_{t=1}^\minmaxbudget \uu_t$.
Then we have the following chain of inequalities holding with
probability at least $1 - \delta$:
\begin{eqnarray}
\min_{\uu\in\C} \max_{\ww\in\C} \trans{\ww} \bldiag \uu - \regret
 &\leq&
\max_{\ww\in\C} \trans{\ww} \bldiag \ubar - \regret
\nonumber
\\
 &\leq&
 \frac{1}{\minmaxbudget} \sum_{t=1}^\minmaxbudget  \trans{\ww_t} \bldiag \uu_t
\label{eq:i1}
\\
 &\leq&
  \min_{\uu\in \C}  \trans{\wwbar} \bldiag \uu  + \regret
\label{eq:i2}
\\
 &\leq&
  \max_{\ww\in\C} \min_{\uu\in \C}  \trans{\ww} \bldiag \uu  + \regret.
\nonumber
\end{eqnarray}
Here, \eqrefii{eq:i1}{eq:i2} follow directly from
\eqref{eqn:fpl3} applied, respectively, to \rowfpl\ and \colfpl\ with
$\delta'=\delta/2$.
Noting that
\[
\max_{\ww\in\C} \min_{\uu\in \C}  \trans{\ww} \bldiag \uu
  =
\min_{\uu\in \C} \max_{\ww\in\C}  \trans{\ww} \bldiag \uu,
\]
the theorem now follows.
\end{proof}

\subsection{Analysis of \ProjectedGD: outer loop}
\label{sec:pf-GD}

Using an analysis similar to~\cite{Zinkevich03:online},
but for a fixed learning rate,
and taking into account the errors introduced by imperfect
projections, we can show the following:
\begin{lemma}  \label{lem:proj-out1}
For the algorithm \ProjectedGD\ with
$\eta = {2}/({\eprefbnd\sqrt{\Tout}})$, we have

\[
\frac{1}{\Tout} \sum_{t=1}^\Tout \trans{\ww_t}\bldiag\uu_t
\geq
\max_{\ww\in\C} \frac{1}{\Tout} \sum_{t=1}^\Tout \trans{\ww}\bldiag\uu_t
- \algerr
\]
where
$\algerr = {2\eprefbnd}/{\sqrt{\Tout}}
               + {\eprefbnd \approxfactor}/{2}$.
\end{lemma}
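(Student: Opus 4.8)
The plan is to recognize the outer loop of \ProjectedGD as online projected gradient \emph{ascent} for the row player, played against the sequence of reward vectors $\bldiag\uu_t$ supplied by the column player's best responses. On round $t$ the row player earns the linear reward $\trans{\ww}\bldiag\uu_t$, whose gradient in $\ww$ is exactly $\bldiag\uu_t$; hence $\zz_{t+1}=\ww_t+\eta\bldiag\uu_t$ is a gradient step, and $\ww_{t+1}=\approxproject(\zz_{t+1},\ww_t)$ is an approximate projection back onto $\C$. Since the asserted inequality is equivalent to a regret bound that must hold against an arbitrary comparator (in particular the maximizer), it suffices to show, for every fixed $\ww\in\C$, that $\tfrac1\Tout\sum_t(\trans{\ww}\bldiag\uu_t-\trans{\ww_t}\bldiag\uu_t)\le\algerr$. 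This is the standard Zinkevich argument, specialized to a fixed step size and modified to tolerate imperfect projections.

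First I would run the usual potential argument on $\lensq{\ww-\ww_t}$. Applying the approximate-projection guarantee \eqref{eq:b1} with $\ss=\ww$, $\vbar=\ww_{t+1}$, $\zz=\zz_{t+1}$ and $\vv_1=\ww_t$ gives
\[
\lensq{\ww-\ww_{t+1}}\le\lensq{\ww-\zz_{t+1}}+\approxfactor\len{\ww_t-\zz_{t+1}}.
\]
Expanding $\lensq{\ww-\zz_{t+1}}=\lensq{\ww-\ww_t}-2\eta\,(\ww-\ww_t)\cdot\bldiag\uu_t+\eta^2\lensq{\bldiag\uu_t}$ and using $\len{\ww_t-\zz_{t+1}}=\eta\len{\bldiag\uu_t}$, I would solve for the per-round reward gap $\trans{\ww}\bldiag\uu_t-\trans{\ww_t}\bldiag\uu_t=(\ww-\ww_t)\cdot\bldiag\uu_t$, sum over $t=1,\dots,\Tout$, telescope the $\lensq{\ww-\ww_t}$ terms, and discard the nonnegative final term $-\lensq{\ww-\ww_{\Tout+1}}$. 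Dividing by $\Tout$ yields
\[
\tfrac1\Tout\sum_{t}\bigl(\trans{\ww}\bldiag\uu_t-\trans{\ww_t}\bldiag\uu_t\bigr)\le\frac{\lensq{\ww-\ww_1}}{2\eta\Tout}+\frac{\eta}{2\Tout}\sum_t\lensq{\bldiag\uu_t}+\frac{\approxfactor}{2\Tout}\sum_t\len{\bldiag\uu_t}.
\]

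To finish I would bound the three geometric quantities. Since every policy vector satisfies $\len{\vpi}=1$, the hull $\C$ lies in the unit ball, so $\lensq{\ww-\ww_1}\le 4$. For the gradients, the block structure of $\bldiag$ gives $\len{\bldiag\uu}\le\eprefbnd$ for all $\uu\in\C$: under uniform exploration each diagonal block $\eprefi$ has a single entry of magnitude $\eprefbnd=K^2$, so acting on the $1/\sqrt{\numbatch}$-scaled per-round probability vector it contributes at most $\eprefbnd^2/\numbatch$ to the squared norm, and the $\numbatch$ blocks sum to $\eprefbnd^2$. Setting $R=\eprefbnd$ and using $\lensq{\bldiag\uu_t}\le R^2$, the right-hand side is at most $\tfrac{2}{\eta\Tout}+\tfrac{\eta R^2}{2}+\tfrac{\approxfactor R}{2}$; substituting the prescribed $\eta=2/(\eprefbnd\sqrt{\Tout})$ collapses the first two terms to $2\eprefbnd/\sqrt{\Tout}$, leaving exactly $\algerr=2\eprefbnd/\sqrt{\Tout}+\eprefbnd\approxfactor/2$. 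The main obstacle is the bookkeeping for the imperfect projection: Zinkevich's analysis relies on the exact projection being nonexpansive, i.e.\ $\lensq{\ww-\ww_{t+1}}\le\lensq{\ww-\zz_{t+1}}$, whereas here each round leaks an extra $\approxfactor\eta\len{\bldiag\uu_t}$ term that must be carried through the telescoping and shown to aggregate into the single benign additive error $\eprefbnd\approxfactor/2$ rather than compounding across rounds.
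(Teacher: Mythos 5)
Your proposal is correct and follows essentially the same route as the paper's proof: the Zinkevich potential argument on $\lensq{\ww-\ww_t}$, with the approximate-projection guarantee \eqref{eq:b1} instantiated at $\ss=\ww$, $\zz=\zz_{t+1}$, $\vv_1=\ww_t$ contributing the per-round leakage $\approxfactor\eta\len{\bldiag\uu_t}\le\approxfactor\eta\eprefbnd$, followed by telescoping, the bounds $\lensq{\ww-\ww_1}\le 4$ and $\len{\bldiag\uu_t}\le\eprefbnd$, and substitution of $\eta=2/(\eprefbnd\sqrt{\Tout})$. The only cosmetic difference is that the paper bounds $\len{\ww_t-\zz_{t+1}}$ by $\eta\eprefbnd$ immediately rather than carrying it to the end; the constants match exactly.
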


\begin{proof}
For all $\ww\in\C$, we have
\begin{eqnarray}
{\lensq{\ww - \ww_{t+1}} - \lensq{\ww - \ww_t}}
&\leq&
{\lensq{\ww - \zz_{t+1}} - \lensq{\ww - \ww_t}}
        + \approxfactor \eta \eprefbnd
\label{eq:dp-g1}
\\
&=&
-2\eta \trans{(\ww - \ww_t)} \bldiag \uu_t
+ \eta^2 \lensq{\bldiag \uu_t}
        + \approxfactor \eta \eprefbnd
\label{eq:g2}
\\
&\leq&
-2\eta \trans{(\ww - \ww_t)} \bldiag \uu_t
+ \eta^2 \eprefbnd^2
        + \approxfactor \eta \eprefbnd.
\nonumber
\end{eqnarray}
Here, \eqref{eq:dp-g1} uses \eqref{eq:b1}, applied to our case where
we have $\zz_{t+1} - \ww_t = \eta \bldiag \uu_t$.
\eqref{eq:g2} follows from straightforward algebra.
Since
$\len{\ww-\ww_1}\leq 2$,
summing over $t=1,\ldots,\Tout$ yields, for all
$\ww\in\C$,
\begin{eqnarray*}
 - 4
&\leq&
\lensq{\ww-\ww_{\Tout+1}} - \lensq{\ww-\ww_1}
\\
&\leq&
-2\eta \sum_{t=1}^\Tout \trans{\ww} \bldiag \uu_t
+2\eta \sum_{t=1}^\Tout \trans{\ww_t} \bldiag \uu_t
+ \eta^2 \eprefbnd^2 \Tout
        + \approxfactor \eta \eprefbnd \Tout.
\end{eqnarray*}
Re-arranging completes the lemma.
\end{proof}

We can prove that the returned vector $\wwbar$ is an
$\algerr$-approximate maxmin solution using a technique similar
to~\cite{FreundSc-GEB}.
(Alternatively, we could use the average of the $\uu_t$'s which is an
$\algerr$-approximate minmax solution by the same proof.)

\begin{theorem}  \label{thm:proj-out2}
The vector $\wwbar$ satisfies
  $\displaystyle\min_{\uu\in \C} \trans{\wwbar} \bldiag \uu
  \geq
  \displaystyle\max_{\ww\in \C} \displaystyle\min_{\uu\in \C} \trans{\ww} \bldiag \uu - \algerr$
where $\algerr$ is as
Lemma~\ref{lem:proj-out1}.
\end{theorem}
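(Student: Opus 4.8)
The plan is to adapt the classical argument of \citet{FreundSc-GEB} showing that when one player runs a no-regret strategy against a best-responding opponent, the time-averaged play converges to an approximate equilibrium. All of the regret accounting has already been isolated in Lemma~\ref{lem:proj-out1}; what remains is to combine that regret guarantee with the two structural facts specific to \ProjectedGD, namely that each $\uu_t$ is an \emph{exact} best response to $\ww_t$ and that $\wwbar$ is the uniform average of the $\ww_t$.

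First I would record the best-response identity: since $\uu_t = \arg\min_{\uu\in\C}\trans{\ww_t}\bldiag\uu$, we have $\trans{\ww_t}\bldiag\uu_t = \min_{\uu\in\C}\trans{\ww_t}\bldiag\uu$ for every $t$. Summing and using the elementary fact that an average of minima is at most the minimum of the average, together with linearity of $\trans{\ww}\bldiag\uu$ in $\ww$, I would obtain
\[
\frac{1}{\Tout}\sum_{t=1}^{\Tout}\trans{\ww_t}\bldiag\uu_t
= \frac{1}{\Tout}\sum_{t=1}^{\Tout}\min_{\uu\in\C}\trans{\ww_t}\bldiag\uu
\leq \min_{\uu\in\C}\frac{1}{\Tout}\sum_{t=1}^{\Tout}\trans{\ww_t}\bldiag\uu
= \min_{\uu\in\C}\trans{\wwbar}\bldiag\uu,
\]
so the realized average payoff is a lower bound on the quantity of interest.

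Next I would bound that same average payoff from below via the regret guarantee. Lemma~\ref{lem:proj-out1} gives $\tfrac{1}{\Tout}\sum_t \trans{\ww_t}\bldiag\uu_t \geq \max_{\ww\in\C}\tfrac{1}{\Tout}\sum_t\trans{\ww}\bldiag\uu_t - \algerr$, and writing $\ubar=\tfrac{1}{\Tout}\sum_t\uu_t$ and using linearity in $\uu$, the right-hand maximum equals $\max_{\ww\in\C}\trans{\ww}\bldiag\ubar$. Since $\ubar\in\C$ by convexity, weak duality (equivalently, von Neumann's minimax theorem applied to the bilinear payoff on the compact convex set $\C$) yields
\[
\max_{\ww\in\C}\trans{\ww}\bldiag\ubar
\geq \min_{\uu\in\C}\max_{\ww\in\C}\trans{\ww}\bldiag\uu
= \max_{\ww\in\C}\min_{\uu\in\C}\trans{\ww}\bldiag\uu.
\]
Chaining the three displays produces exactly the claimed bound.

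I expect no serious obstacle: the argument is a three-line sandwich once the best-response and averaging facts are in place, with Lemma~\ref{lem:proj-out1} supplying the one quantitative ingredient. The only point needing a little care is invoking the minimax equality over $\C$ rather than the standard simplex, but since $\trans{\ww}\bldiag\uu$ is bilinear and $\C$ is compact and convex, the minimax theorem applies verbatim and no skew-symmetry of $\bldiag$ is required for this step.
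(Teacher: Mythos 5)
Your proposal is correct and follows essentially the same route as the paper's proof: both arguments chain the best-response identity with the exchange of average and minimum, apply Lemma~\ref{lem:proj-out1}, and finish by plugging in $\ubar$ and invoking the minimax equality (in fact only weak duality, $\min_{\uu}\max_{\ww} \geq \max_{\ww}\min_{\uu}$, is needed for the final step). The paper merely writes the same three displays as one long chain of inequalities.
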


\begin{proof}
Let
$
   \ubar = \frac{1}{\Tout} \sum_{t=1}^\Tout \uu_t
$.
Then
\begin{eqnarray*}
  \max_{\ww\in \C} \min_{\uu\in \C} \trans{\ww} \bldiag \uu
&\geq&
  \min_{\uu\in \C} \trans{\wwbar} \bldiag \uu
\\
&=&
  \min_{\uu\in \C} \frac{1}{\Tout} \sum_{t=1}^\Tout \trans{\ww_t} \bldiag \uu
\\
&\geq&
  \frac{1}{\Tout} \sum_{t=1}^\Tout \min_{\uu\in \C} \trans{\ww_t} \bldiag \uu
\\
&=&
  \frac{1}{\Tout} \sum_{t=1}^\Tout \trans{\ww_t} \bldiag \uu_t
\\
&\geq&
  \max_{\ww\in\C} \frac{1}{\Tout} \sum_{t=1}^\Tout \trans{\ww} \bldiag \uu_t - \algerr
\\
&=&
  \max_{\ww\in\C} \trans{\ww} \bldiag \ubar - \algerr
\\
&\geq&
  \min_{\uu\in\C} \max_{\ww\in\C} \trans{\ww} \bldiag \uu - \algerr.
\end{eqnarray*}
\end{proof}

\subsection{Analysis of \ProjectedGD: inner loop}
\label{sec:pf-GD-inner}

It remains to analyze the inner loop of \ProjectedGD, i.e., the approximate-projection procedure
$\approxproject(\zz,\vv_1)$.

Let $\vstar$ be the projection of $\zz$ onto the policy hull $\C$.
We can prove the following for this algorithm using
$\lensq{\vstar-\vv_t}$ as a potential function.
\begin{lemma}  \label{lem:inner-regret}
For the algorithm \approxproject\ with
$ \nu = {\len{\zz-\vv_1}}/{\sqrt{\Tin}}$ and $\regin=8\len{\zz-\vv_1}/\sqrt{\Tin}$,
we have
\[
\frac{1}{\Tin} \sum_{t=1}^\Tin F(\ss_t,\vv_t)
\geq
\frac{1}{\Tin} \sum_{t=1}^\Tin F(\ss_t,\vstar)
- \regin.
\]
\end{lemma}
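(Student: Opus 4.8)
The plan is to recognize the inner loop as online gradient ascent run by the $\vv$-player against the sequence of concave reward functions $F(\ss_t,\cdot)$, and to prove a standard regret bound against the fixed comparator $\vstar$. First I would record the structural facts I need. Writing $F(\ss,\vv)=\lensq{\ss-\zz}-\lensq{\ss-\vv}$, the map $\vv\mapsto F(\ss_t,\vv)$ is (strongly) concave with gradient $\nabla_\vv F(\ss_t,\vv_t)=2(\ss_t-\vv_t)$. The update $\vv_{t+1}=(1-\nu)\vv_t+\nu\ss_t=\vv_t+\tfrac{\nu}{2}\,\nabla_\vv F(\ss_t,\vv_t)$ is therefore exactly a gradient-ascent step of size $\nu/2$; since it is a convex combination of $\vv_t,\ss_t\in\C$, it automatically stays in $\C$ and no projection is needed. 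Notably, only $\ss_t\in\C$ is used (not the specific best-response choice), so the bound holds for any such sequence, which is all the lemma claims.

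Next I would carry out the potential argument suggested by the lemma, taking $\Phi_t=\lensq{\vstar-\vv_t}$. Expanding $\vv_{t+1}-\vstar=(\vv_t-\vstar)+\nu(\ss_t-\vv_t)$ gives
\[
  \Phi_{t+1}-\Phi_t = 2\nu\,(\vv_t-\vstar)\cdot(\ss_t-\vv_t)+\nu^2\lensq{\ss_t-\vv_t}.
\]
Combining this with the concavity (gradient) inequality $F(\ss_t,\vv_t)-F(\ss_t,\vstar)\geq 2(\ss_t-\vv_t)\cdot(\vv_t-\vstar)$ and solving for the inner product yields the per-round estimate
\[
  F(\ss_t,\vv_t)-F(\ss_t,\vstar)\;\geq\;\frac{\Phi_{t+1}-\Phi_t}{\nu}-\nu\,\lensq{\ss_t-\vv_t}.
\]
Telescoping over $t=1,\dots,\Tin$ and dropping the nonnegative term $\Phi_{\Tin+1}\geq 0$ then leaves
\[
  \sum_{t=1}^{\Tin}\bigl[F(\ss_t,\vv_t)-F(\ss_t,\vstar)\bigr]\;\geq\;-\frac{\Phi_1}{\nu}-\nu\sum_{t=1}^{\Tin}\lensq{\ss_t-\vv_t}.
\]

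It then remains to control the two error terms so that they scale with $d:=\len{\zz-\vv_1}$. For the second term I would use that every policy vector $\vpi$ is a unit vector (one entry $1/\sqrt{\numbatch}$ per block, $\numbatch$ blocks), so the policy hull $\C$ has diameter at most $2$; hence $\lensq{\ss_t-\vv_t}\leq 4$ and the sum is at most $4\Tin$. The crux is the first term: I would invoke the defining obtuse-angle property of the Euclidean projection $\vstar$ of $\zz$ onto the convex set $\C$, namely $(\vv_1-\vstar)\cdot(\zz-\vstar)\leq 0$ for $\vv_1\in\C$; expanding $\lensq{\zz-\vv_1}=\lensq{\zz-\vstar}+2(\zz-\vstar)\cdot(\vstar-\vv_1)+\lensq{\vstar-\vv_1}$ and using this sign then shows $\Phi_1=\lensq{\vstar-\vv_1}\leq\lensq{\zz-\vv_1}=d^2$. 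Plugging in $\nu=d/\sqrt{\Tin}$ balances the two contributions, giving a total of at least $-d\sqrt{\Tin}-4d\sqrt{\Tin}=-5d\sqrt{\Tin}$, so after dividing by $\Tin$ the average regret is at least $-5d/\sqrt{\Tin}\geq-\regin=-8d/\sqrt{\Tin}$, as required (the constant $8$ is loose). I expect this last point to be the main obstacle: the nontrivial content is getting the regret to scale with $\len{\zz-\vv_1}$ rather than with a fixed diameter of $\C$—this is exactly what makes the approximate-projection guarantee \eqref{eq:b1} (equivalently \eqref{eq:g3}) useful for the outer loop—and it relies on both the projection inequality bounding $\Phi_1$ and the balancing choice $\nu=\len{\zz-\vv_1}/\sqrt{\Tin}$.
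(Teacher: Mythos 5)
Your proof is correct and follows essentially the same route as the paper: the same potential $\lensq{\vstar-\vv_t}$, the same per-round inequality (your concavity/gradient inequality $F(\ss_t,\vv_t)-F(\ss_t,\vstar)\geq 2(\ss_t-\vv_t)\cdot(\vv_t-\vstar)$ is exactly the paper's algebraic bound in \eqref{eq:b5}), the same diameter bound $\lensq{\ss_t-\vv_t}\leq 4$, and the same telescoping-and-balancing finish. The only divergence is that you bound $\lensq{\vstar-\vv_1}\leq\lensq{\zz-\vv_1}$ via the obtuse-angle property of the projection, whereas the paper uses the triangle inequality together with $\len{\vstar-\zz}\leq\len{\vv_1-\zz}$ to get the weaker bound $4\lensq{\zz-\vv_1}$ — which is why you end up with constant $5$ where the paper's choice of $\nu$ yields exactly $8$.
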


\begin{proof}
We have
\begin{eqnarray}
  \lensq{\vstar - \vv_{t+1}} - \lensq{\vstar - \vv_t}
&=&
  \lensq{\vstar - \vv_t + \nu (\vv_t - \ss_t)}
          - \lensq{\vstar - \vv_t}
\nonumber
\\
&=&
 2\nu (\vstar - \vv_t)\cdot(\vv_t - \ss_t)
  + \nu^2 \lensq{\vv_t - \ss_t}
\nonumber
\\
&\leq&
 2\nu (\vstar - \vv_t)\cdot(\vv_t - \ss_t)
  + 4\nu^2
\label{eq:b4}
\\
&\leq&
 \nu(F(\ss_t,\vv_t) - F(\ss_t,\vstar)) + 4\nu^2.
\label{eq:b5}
\end{eqnarray}
\eqref{eq:b4} uses
$ \len{\vv_t-\ss_t}\leq \len{\vv_t}+\len{\ss_t} \leq 2 $.
To see \eqref{eq:b5}, note that
\begin{eqnarray*}
2(\vstar - \vv_t)\cdot(\vv_t - \ss_t)
&=&
2\vstar\cdot\vv_t - 2\lensq{\vv_t} - 2\ss_t\cdot(\vstar - \vv_t)
\\
&\leq&
{\lensq{\vstar} + \lensq{\vv_t}}
 - 2\lensq{\vv_t} - 2\ss_t\cdot(\vstar - \vv_t)
\\
&=&
\brackets{2\ss_t\cdot(\vv_t-\zz) + \lensq{\zz} - \lensq{\vv_t}}
\\
&&
-
\brackets{2\ss_t\cdot(\vstar-\zz) + \lensq{\zz} - \lensq{\vstar}}
\\
&=&
F(\ss_t,\vv_t) - F(\ss_t,\vstar).
\end{eqnarray*}
The inequality here uses the fact that, for any two vectors $\uu$ and
$\ww$, we have
$ 2\uu\cdot\ww \leq \lensq{\uu} + \lensq{\ww} $.

Also,
\begin{eqnarray}
  \len{\vstar-\vv_1}
  &\leq&
  \len{\vstar-\zz} + \len{\zz-\vv_1}
\nonumber
\\
  &=&
  \min_{\vv\in\C} \len{\vv-\zz} + \len{\zz-\vv_1}
\nonumber
\\
  &\leq&
  \len{\vv_1-\zz} + \len{\zz-\vv_1}.
\label{eq:h1}
\end{eqnarray}

Summing \eqref{eq:b5} for $t=1,\ldots,\Tin$ and combining with
\eqref{eq:h1} gives
\begin{eqnarray*}
 - 4 \lensq{\zz - \vv_1}
&\leq&
 \lensq{\vstar - \vv_{\Tin+1}} -
 \lensq{\vstar - \vv_{1}}
\\
&\leq&
 \nu \sum_{t=1}^\Tin F(\ss_t,\vv_t)
 -  \nu \sum_{t=1}^\Tin F(\ss_t,\vstar)
 + 4\nu^2 \Tin.
\end{eqnarray*}
Re-arranging and applying our choice of $\nu$ completes the lemma.
\end{proof}

Next, we show that $\vbar$ satisfies the specification given in
\eqref{eq:g3}.

\begin{theorem}
\label{thm:approx-proj}
For the algorithm \approxproject,
$  \min_{\ss\in\C} F(\ss,\vbar)
  \geq  -\regin$
with $\regin$ set as in Lemma~\ref{lem:inner-regret}.
Thus, \eqref{eq:g3} holds for $\vbar$ if we set
$\approxfactor = 8/\sqrt{\Tin}$.
\end{theorem}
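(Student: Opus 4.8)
The plan is to run a standard minimax/game-playing argument, coupling the best-response structure of the $\ss_t$'s with the no-regret guarantee of Lemma~\ref{lem:inner-regret}, and then to reduce everything to the single key fact that $F(\ss,\vstar)\geq 0$ for every $\ss\in\C$, where $\vstar$ is the projection of $\zz$ onto $\C$.

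First I would exploit the two structural properties of $F$. Since $F(\ss,\vv)$ is concave in $\vv$ and $\vbar=\frac{1}{\Tin}\sum_t\vv_t$, Jensen's inequality gives, for every fixed $\ss$, that $F(\ss,\vbar)\geq\frac{1}{\Tin}\sum_t F(\ss,\vv_t)$. Taking the minimum over $\ss\in\C$ and pushing the min inside the sum (the min of a sum is at least the sum of the mins) yields
\[
\min_{\ss\in\C} F(\ss,\vbar)\;\geq\;\frac{1}{\Tin}\sum_{t=1}^{\Tin}\min_{\ss\in\C}F(\ss,\vv_t)\;=\;\frac{1}{\Tin}\sum_{t=1}^{\Tin}F(\ss_t,\vv_t),
\]
where the last equality uses that, because $F(\ss,\vv_t)$ is linear in $\ss$ with the $\ss$-independent terms dropping out, the update $\ss_t=\arg\min_{\ss\in\C}\ss\cdot(\vv_t-\zz)$ is exactly the minimizer of $F(\cdot,\vv_t)$ over $\C$.

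Next I would apply Lemma~\ref{lem:inner-regret} to replace the running iterates $\vv_t$ by the fixed comparator $\vstar$, at a cost of $\regin$, so that $\frac{1}{\Tin}\sum_t F(\ss_t,\vv_t)\geq\frac{1}{\Tin}\sum_t F(\ss_t,\vstar)-\regin$. The crux---and the step I expect to be the main obstacle---is then showing $\frac{1}{\Tin}\sum_t F(\ss_t,\vstar)\geq 0$, which I would obtain from the pointwise bound $F(\ss,\vstar)\geq 0$ for all $\ss\in\C$. Expanding
\[
F(\ss,\vstar)=\lensq{\ss-\zz}-\lensq{\ss-\vstar}=2(\ss-\vstar)\cdot(\vstar-\zz)+\lensq{\vstar-\zz},
\]
the term $\lensq{\vstar-\zz}$ is nonnegative, and the variational characterization of Euclidean projection onto the convex set $\C$---namely $(\ss-\vstar)\cdot(\zz-\vstar)\leq 0$ for all $\ss\in\C$---forces the cross term to be nonnegative as well. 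Chaining the three displays gives $\min_{\ss\in\C}F(\ss,\vbar)\geq -\regin$.

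Finally, substituting $\regin=8\len{\zz-\vv_1}/\sqrt{\Tin}$ from Lemma~\ref{lem:inner-regret} shows $\min_{\ss\in\C}F(\ss,\vbar)\geq -\frac{8}{\sqrt{\Tin}}\,\len{\vv_1-\zz}$, which is precisely \eqref{eq:g3} with $\approxfactor=8/\sqrt{\Tin}$. The only genuinely nontrivial ingredient is recognizing that the correct comparator in the regret bound is the true projection $\vstar$ and that the projection inequality makes $F(\cdot,\vstar)$ nonnegative over all of $\C$; everything else is bookkeeping with Jensen, best response, and the already-established inner-loop regret bound.
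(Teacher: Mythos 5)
Your proposal is correct and follows essentially the same chain as the paper's proof: Jensen plus concavity of $F(\ss,\cdot)$, exchanging min and sum, the best-response identity for $\ss_t$, Lemma~\ref{lem:inner-regret} with comparator $\vstar$, and finally $F(\ss,\vstar)\geq\lensq{\zz-\vstar}\geq 0$ via the projection inequality. The only (immaterial) difference is that you apply this last pointwise bound term by term, whereas the paper first uses linearity of $F(\cdot,\vstar)$ to collapse the sum to $F(\sbar,\vstar)$ before invoking it.
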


\begin{proof}
Let
\[
 \sbar=\frac{1}{\Tin} \sum_{t=1}^\Tin \ss_t.
\]

Then
\begin{eqnarray}
\min_{\ss\in\C} F(\ss,\vbar)
&\geq&
\min_{\ss\in\C} \frac{1}{\Tin} \sum_{t=1}^\Tin F(\ss,\vv_t)
\label{eq:b7}
\\
&\geq&
\frac{1}{\Tin} \sum_{t=1}^\Tin \min_{\ss\in\C}  F(\ss,\vv_t)
\nonumber
\\
&=&
\frac{1}{\Tin} \sum_{t=1}^\Tin  F(\ss_t,\vv_t)
\label{eq:b8}
\\
&\geq&
\frac{1}{\Tin} \sum_{t=1}^\Tin  F(\ss_t,\vstar) - \regin
\nonumber
\\
&=&
 F(\sbar,\vstar) - \regin
\label{eq:b9}
\\
&\geq&
 \min_{\ss\in\C} F(\ss,\vstar) - \regin
\nonumber
\\
&\geq&
 \lensq{\zz-\vstar} - \regin
\geq - \regin.
\label{eq:b10}
\end{eqnarray}
\eqref{eq:b7} uses Jensen's inequality and the fact that
$F(\ss,\cdot)$ is concave for each $\ss$.
\eqref{eq:b8} follows from our choice of $\ss_t$ (which minimizes
$F(\cdot,\vv_t)$).
\eqref{eq:b9} uses linearity of $F(\cdot,\vv)$ for each $\vv$.
And \eqref{eq:b10} uses
$F(\ss,\vstar)\geq \lensq{\zz-\vstar}$
for all $\ss\in\C$, which
follows from simple Euclidean geometry and the
Pythagorean theorem.
\end{proof}

Finally, combining with Lemma~\ref{lem:proj-out1} and
Theorem~\ref{thm:proj-out2}, this shows that the overall solution $\wwbar$
will be an $\algerr$-approximate maxmin solution where
$
\algerr = {2\eprefbnd}/{\sqrt{\Tout}}
               + {4 \eprefbnd}/{\sqrt{\Tin}}
$.
Thus, we can obtain any desired value of $\algerr$ by setting
$\Tin=\Tout=\ceiling{36 \eprefbnd^2 / \algerr^2}$.
The resulting number of calls to the classification oracle will be
$\Tout+\Tin\Tout = O(\eprefbnd^4 / \algerr^4)$.
As earlier noted, compared to \SparringFPL, this bound gives a different
trade-off between $\numbatch$ and $\algerr$.
For the case that $\algerr=O(\empmatbnd)$, and with $\empmatbnd$ as in
\sectref{sec:explore-first}, this algorithm gives a better bound by a
factor of $O((\ln \picard)/K^2)$.

\end{document}